\documentclass[journal]{IEEEtran}

\usepackage[T1]{fontenc}
\usepackage{cite}
\usepackage{amsmath,amssymb,amsfonts}
\usepackage[vlined,boxed,ruled,linesnumbered]{algorithm2e}
\usepackage{graphicx}
\usepackage{subfigure}
\usepackage{textcomp}
\usepackage{xcolor}
\usepackage{bbding}
\usepackage{booktabs}
\usepackage[flushleft]{threeparttable}
\usepackage{ntheorem}
\usepackage{multirow}
\usepackage{boxedminipage}
\usepackage{url}
\usepackage{diagbox}
\usepackage{booktabs}
\usepackage{stmaryrd} 

\theoremseparator{.}
\newtheorem{theorem}{Theorem}
\newtheorem{lemma}{Lemma}
\newtheorem{definition}{Definition}
\newtheorem*{proof}{Proof}
\newcommand{\lb}{\llbracket}
\newcommand{\rb}{\rrbracket}

\DeclareMathOperator{\lcm}{lcm}
\DeclareMathOperator{\st}{s.t. }
\renewcommand{\KwData}{\textbf{Input: }}
\renewcommand{\KwResult}{\textbf{Output: }}
\newcommand{\FuncName}[1]{\mbox{\normalfont\textsc{#1}}}

\colorlet{mix}{blue!58!black}

\ifCLASSINFOpdf

\else

\fi
\usepackage{amsmath}

\interdisplaylinepenalty=2500

\hyphenation{op-tical net-works semi-conduc-tor}

\begin{document}
	
\title{Evolution as a Service: A Privacy-Preserving Genetic Algorithm for Combinatorial Optimization}
	
	\author{Bowen~Zhao, ~\IEEEmembership{Member,~IEEE,}
		Wei-Neng~Chen,~\IEEEmembership{Senior Member,~IEEE,}
		Feng-Feng~Wei,~\IEEEmembership{Student Member,~IEEE,}
		Ximeng~Liu,~\IEEEmembership{Senior Member,~IEEE,}
		Qingqi~Pei,~\IEEEmembership{Senior Member,~IEEE,}
		and Jun Zhang,~\IEEEmembership{Fellow,~IEEE}
		\thanks{This work is supported in part by the Key Project of Science and Technology Innovation 2030 supported by the Ministry of Science and Technology of China (Grant No. 2018AAA0101300), the National Natural Science Foundation of China (Grant Nos. 62072109, U1804263, 61702105, and 61632013), and the Peng Cheng Laboratory Project of Guangdong Province PCL2018KP004.}
		\thanks{B. Zhao and Q. Pei are with Guangzhou Institute of Technology, Xidian University, Guangzhou, China. E-mail: bwinzhao@gmail.com, qqpei@mail.xidian.edu.cn}
		\thanks{W.-N. Chen and F.-F. Wei are with the School of Computer Science and Engineering, South China University of Technology, Guangzhou, China. E-mail: cwnraul634@aliyun.com}
		\thanks{X. Liu is with the College of Computer and Data Science, Fuzhou University, Fujian, China, and Cyberspace Security Research Center, Peng Cheng Laboratory, Shenzhen. snbnix@gmail.com}
		\thanks{J. Zhang is with the Department of Electrical and Electronic Engineering, Hanyang University, Ansan 15588, South Korea. E-mail: junzhang@ieee.org}
		\thanks{Corresponding authors: W.-N. Chen, B. Zhao}
		\thanks{Manuscript received xx xx, xxxx; revised xx xx, xxxx.}}
	
	\markboth{IEEE Transactions on Cybernetics,~Vol.~xx, No.~x, xx~xxxx}%
	{Shell \MakeLowercase{\textit{et al.}}:}
	
	\maketitle
	
\begin{abstract}
	Evolutionary algorithms (EAs), such as the genetic algorithm (GA), offer an elegant way to handle combinatorial optimization problems (COPs). However, limited by expertise and resources, most users do not have enough capability to implement EAs to solve COPs. An intuitive and promising solution is to outsource evolutionary operations to a cloud server, whilst it suffers from privacy concerns. To this end, this paper proposes a novel computing paradigm, evolution as a service (EaaS), where a cloud server renders evolutionary computation services for users without sacrificing users' privacy. Inspired by the idea of EaaS, this paper designs PEGA, a novel privacy-preserving GA for COPs. Specifically, PEGA enables users outsourcing COPs to the cloud server holding a competitive GA and approximating the optimal solution in a privacy-preserving manner. PEGA features the following characteristics. First, any user without expertise and enough resources can solve her COPs. Second, PEGA does not leak contents of optimization problems, i.e., users' privacy. Third, PEGA has the same capability as the conventional GA to approximate the optimal solution. We implements PEGA falling in a twin-server architecture and evaluates it in the traveling salesman problem (TSP, a widely known COP). Particularly, we utilize encryption cryptography to protect users' privacy and carefully design a suit of secure computing protocols to support evolutionary operators of GA on encrypted data. Privacy analysis demonstrates that PEGA does not disclose the contents of the COP to the cloud server. Experimental evaluation results on four TSP datasets show that PEGA is as effective as the conventional GA in approximating the optimal solution.
\end{abstract}
	
\begin{IEEEkeywords}
	Evolutionary computation, evolution as a service, combinatorial optimization, secure computing, privacy protection.
\end{IEEEkeywords}
	
\IEEEpeerreviewmaketitle

\section{Introduction}
\IEEEPARstart{E}{volutionary} algorithms (EAs), such as genetic algorithm (GA), are powerful tools to tackle combinatorial optimization problems (COPs) in science and engineering fields. Many problems faced by science and engineering can be formulated as COPs, such as synthetic biology, transport of goods and planning, production planning \cite{naseri2020application,paschos2017applications}. EA has proven to be an powerful tool in handling COPs due to its global probabilistic search ability based on biological evolution such as selection and mutation \cite{wang2021set,radhakrishnan2021evolutionary}. Applications of science and engineering have strong requirements for EAs to tackle optimization problems \cite{paschos2017applications}.

Limited expertise and resources of common users hinder them from tackling COPs through EAs effectively. In practice, most users facing COPs lack expertise, such as EAs and programming skill for EAs implementation. Also, EAs based on biological evolution require plenty of iterative operations to search the approximate optimal solution, which consumes abundance computing resources. In the sight of users, even though they have the need for COPs, fail to effectively solve the COP due to limited capability and resources.

One promising and elegant solution is that the cloud server renders an evolutionary computing service for users. The cloud server is equipped with sufficient computing and storage resources and can offer convenient and flexible computation services, such as training and inference of machine learning \cite{gilad2016cryptonets,mohassel2018aby3,mishra2020delphi,rathee2020cryptflow2}, named machine learning as a service (MLaaS). In MLaaS, users outsource tasks of training or inference to the cloud server and get results. The cloud server performs computing of training or inference. As computing provided by the cloud server, MLaaS does not require users to have expertise and sufficient computing resources. Similarly, users are able to outsource tasks of evolutionary computation to the cloud server and get optimization results even though they lack programming skills for EAs implementation and sufficient resource to perform EAs.

Privacy concerns are critical challenges for outsourcing computation of EAs to the cloud server just like MLaaS \cite{gilad2016cryptonets,mohassel2018aby3,mishra2020delphi,rathee2020cryptflow2}. Optimization results of COPs are private information of users \cite{sakuma2007genetic,han2007privacy,funke2010privacy}. For example, optimization results of COPs for synthetic biology, transport of goods and planning, production planning involve private biologic information, planning of goods transportation and production to name but a few. The cloud server is not generally regarded as a trusted entity in an outsourcing computation scenario \cite{shan2018practical,mohassel2018aby3,mishra2020delphi,rathee2020cryptflow2}, such as iCloud leaking celebrity photos, Amazon Web Services exposing Facebook user records. Obviously, no user or company is willing to reveal biologic information and planning of goods transportation and production to others. Moreover, many regulations stipulate to protect personal data. GDPR\footnote{GDPR: General Data Protection Regulation (EU)} stipulates any information relating to an identified or identifiable natural person is private and should be protected. Also, contents of COPs should be regarded as private information. Given contents of the COP, the cloud server holding EAs can obtain the optimization results, which breaches privacy regulation.

To tackle privacy concerns of outsourcing computation of EAs, in this paper, we define a novel computing paradigm, evolution as a service (EaaS), the cloud server rendering evolutionary computing service for users without sacrificing users' privacy. Broadly speaking, the cloud server encapsulates EAs as a service. Users outsource tasks of evolutionary computation to the cloud server and adopt privacy-preserving methods (e.g., encryption cryptography) to protect privacy. The cloud server performs evolutionary operations and returns optimization results to users. In EaaS, the cloud server cannot learn users' contents of COP and optimization results. Also, users are not required to have expertise of EAs and sufficient resources. In short, EaaS enables users convenient and flexible solving COPs without sacrificing privacy, which relieves the dilemma between evolutionary computation and privacy concerns.
	
The vital idea of EaaS is that users outsource encrypted contents of the optimization problem to the cloud server, and the cloud server renders an evolutionary computation service for users over encrypted data. Technically, the implementation of EaaS suffers from several challenges.

\textbf{First}, the cloud server requires to perform evolutionary operations without sacrificing users' privacy. EA involves basic evolutionary operations including  population initialization, evaluation, selection, crossover, and mutation \cite{sakuma2007genetic,sun2020automatically}. The population initialization requires randomly generating several hundred or thousands of individuals, and each individual represents a possible solution. Arguably, when the cloud server has no knowledge about contents of the COP, it is not a trivial task to generate possible solutions. Furthermore, if the cloud server has difficulty in initializing the population, it is also challenging to perform evaluation, selection, crossover, and mutation operations as the latter relies on the initialized population.

\textbf{Second}, the cloud server can evaluate the fitness value of each individual in the population but fails to learn the possible solution. In EAs, the fitness value determines the quality of solutions and is a crucial metric to select dominant individuals. To protect users' privacy, it should prevent the cloud server from obtaining users' possible solutions \cite{sakuma2007genetic}. Unfortunately, if the cloud server has no knowledge of possible solutions, it fails to evaluate the fitness values of individuals in the population.

\textbf{Third}, the cloud server can select dominant individuals without knowing individuals' fitness values. EA is inspired by the process of natural selection, so its critical operation is to select dominant individuals based on individuals' fitness values. Technically, it requires the cloud server to compare individuals' fitness values under unknowing them. Intuitively, secure comparison protocols \cite{yao1982protocols,veugen2014encrypted,veugen2015secure} seems to provide a potential solution for this. However, the work \cite{yao1982protocols} requires two-party holding private data to perform a comparison operation. If the user participates in the comparison operation, it significantly increases the user's communication overhead as EA needs several hundred or thousands of individuals to generate the approximate optimal solution. The protocols \cite{veugen2014encrypted,veugen2015secure} only generates an encrypted comparison result. Given encrypted comparison results, the cloud server fails to select dominant individuals. In short, selecting dominant individuals has challenges in communications and operations.

To tackle the above challenges, this paper focuses on the implementation of EaaS through GA and carefully designs a privacy-preserving GA, called PEGA\footnote{PEGA comes form \textbf{\underline{P}}rivacy-pr\textbf{\underline{E}}serving \textbf{\underline{G}}enetic \textbf{\underline{A}}lgorithm}. Specifically, we exploit the threshold Paillier cryptosystem (THPC) \cite{lysyanskaya2001adaptive} and one-way mapping function to protect the user's privacy. The homomorphism of THPC enables evaluating individuals' fitness values over encrypted data. Also, we propose a suite of secure computation protocols to support privacy-preserving evolutionary operations of GA, such as selection. Our contributions can be concluded as three-folds.
\begin{itemize}
	\item We propose a novel computing paradigm, EaaS, a privacy-preserving evolutionary computation paradigm that outsources evolutionary computation to a cloud server. EaaS does not require users to have expertise of EAs and programming skills for EAs implementation but can output the approximate optimal solution for users. Furthermore, EaaS does not leak users' privacy to the cloud server.
	
	\item We carefully design PEGA, a privacy-preserving genetic algorithm based on the computing paradigm EaaS. Particularly, a secure division protocol (\texttt{SecDiv}) and a secure comparison protocol (\texttt{SecCmp}) are presented to support privacy-preserving fitness proportionate selection. \texttt{SecDiv} and \texttt{SecCmp} enable the cloud server computing the probability of each individual being selected and select potentially dominant individuals without disclosing possible solutions, respectively.
	
	\item We take four TSP (a widely kwnon COP) datasets (i.e., gr48, kroA100, eil101, and kroB200) to evaluate the effectiveness and efficiency of PEGA. Resultss of experiments and analyses on four TSP datasets demonstrate that PEGA is as effective as the conventional GA \cite{larranaga1999genetic} in approximating the optimal solution.
\end{itemize}

The rest of this paper is organized as follows. In Section II, the related work is briefly described. In Section III, we formulate EaaS and PEGA. The design of PEGA is elaborated in Section IV. In Section V, PEGA for TSP is given. Results of privacy analysis and experimental evaluation are shown in Section VI. Finally, we conclude the paper in Section VII.

\section{Related Work}
In this section, we briefly review privacy-preserving evolutionary algorithms (EAs). In contrast to privacy-preserving neural networks (NNs) inference \cite{gilad2016cryptonets,rathee2020cryptflow2}, privacy-preserving evolutionary algorithms have received little attention. One possible reason is EAs require the server to perform random operations, such as population initialization, mutation, while the operations of NNs are generally deterministic. Also, privacy-preserving NNs inference does not need the server to obtain intermediate results. On the contrary, privacy-preserving EAs require the server to learn intermediate results to perform subsequent operations. For example, the server requires to learn the plaintext comparison result to select dominant individuals.

Sakuma \textit{et al.} \cite{sakuma2007genetic} proposed a privacy-preserving GA by means of the idea of secure multi-party computation and the Paillier cryptosystem to solve TSP. The work \cite{sakuma2007genetic} considers a scenario where multiple servers hold traveling costs while a user wants to choose the server that provides the optimal route; Servers and the user are unwilling to disclose their own private data. Thus, the work \cite{sakuma2007genetic} requires interaction between the user and servers. Han \textit{et al.} \cite{han2007privacy} presented a privacy-preserving GA for rule discovery, where two parties holding datasets jointly perform a privacy-preserving GA to discover a better set of rules in a privacy-preserving manner. The scheme \cite{han2007privacy} also needs two parties to interact to generate an optimal solution. Funke \textit{et al.} \cite{funke2010privacy} designed a privacy-preserving multi-object EA based on Yao's secure two-party protocol \cite{yao1982protocols}. The authors in \cite{funke2010privacy} claim that their solution improves security and efficiency, but their solution still requires two parties to interact. Jiang \textit{et al.} \cite{jiang2020privacy} put forward to a cloud-based privacy-preserving GA by means of somewhat homomorphic encryption, where a user outsources operations of GA to the cloud server. However, the work \cite{jiang2020privacy} fails to support privacy-preserving selection operations, and no practical problem is involved to evaluate its effectiveness and efficiency. Zhan \textit{et al.} \cite{zhan2021new} proposed a rank-based cryptographic function (RCF) to construct privacy-preserving EAs including particle swarm optimization and differential evolution. However, the authors do not the construct of RCF and their scheme suffers from some privacy concerns. Although a designer in \cite{zhan2021new} fails to obtain the fitness function, he holds possible solutions. Thus, as long as the designer learns which solution is dominant, he can obtain the approximate optimal solution, which discloses a user's privacy.

From the view of existing privacy-preserving EAs, there is no effective solution that provides a privacy-preserving evolution service for users that does not require the user to interact. Motivated by this, we formulate EaaS and give its implementation through GA.

\section{Formulation}
In this section, we give formal definitions of evolutionary as a service (EaaS) and privacy-preserving genetic algorithm (PEGA), where PEGA is a concrete implementation of EaaS.
\subsection{Formulation for EaaS}
\begin{figure}
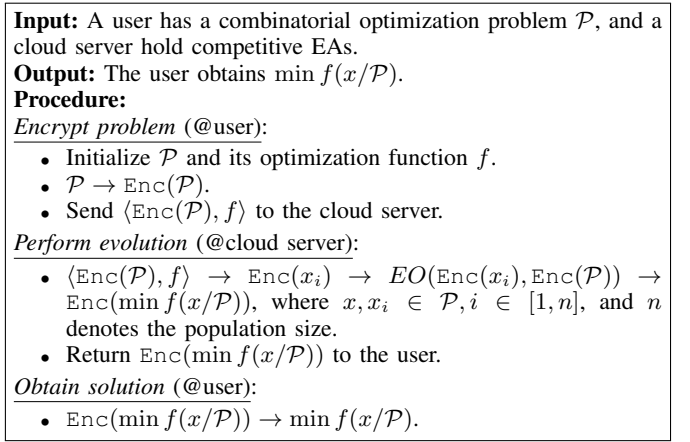

	\centering
	\small
	\begin{boxedminipage}{\linewidth}
		\textbf{Input:} A user has a combinatorial optimization problem $\mathcal{P}$, and a cloud server hold competitive EAs.\\
		\textbf{Output:} The user obtains $\min f(x/\mathcal{P})$.\\
		\textbf{Procedure:}\\
		\underline{\textit{Encrypt problem} (@user)}:
		\begin{itemize}
			\item Initialize $\mathcal{P}$ and its optimization function $f$.
			\item $\mathcal{P} \rightarrow \texttt{Enc}(\mathcal{P})$.
			\item Send $\langle \texttt{Enc}(\mathcal{P}), f\rangle$ to the cloud server.
		\end{itemize}
		\underline{\textit{Perform evolution} (@cloud server)}:
		\begin{itemize}
			\item $\langle\texttt{Enc}(\mathcal{P}), f\rangle \rightarrow \texttt{Enc}(x_i) \rightarrow EO(\texttt{Enc}(x_i),\texttt{Enc}(\mathcal{P})) \rightarrow \texttt{Enc}(\min f(x/\mathcal{P}))$, where $x,x_i\in\mathcal{P}, i\in[1, n]$, and $n$ denotes the population size.
			\item  Return $\texttt{Enc}(\min f(x/\mathcal{P}))$ to the user.
		\end{itemize}
		\underline{\textit{Obtain solution} (@user)}:
		\begin{itemize}
			\item $\texttt{Enc}(\min f(x/\mathcal{P})) \rightarrow \min f(x/\mathcal{P})$.
		\end{itemize}
	\end{boxedminipage}
	\caption{Structure of EaaS.}
	\label{s4eaas}
\end{figure}
\begin{definition}[EaaS]
	EaaS consists of users and a cloud server, where users have a requirement of solving a COP (denoted by $\mathcal{P}$) through evolutionary algorithms (EAs), whilst the cloud server holds competitive EAs and sufficient resources to perform EAs. The cloud server encapsulates EAs as a server and renders convenient and flexible evolutionary computing service for users. To avoid exposing privacy to the cloud server, users encrypt the content of the COP denoted by $\texttt{Enc}(\mathcal{P})$ and outsource it to the cloud server. Taking as input $\texttt{Enc}(\mathcal{P})$ and an EA, the cloud server performs evolutionary operations (e.g., evaluation, selection, crossover, mutation) denoted by $EO(\texttt{Enc}(x), \texttt{Enc}(\mathcal{P}))$ and returns an encrypted optimal solution $\texttt{Enc}(\min f(x/\mathcal{P}))$ to the user, where $\texttt{Enc}(x)$ indicates an encrypted solution, and $f(\cdot)$ is the objective function of $\mathcal{P}$. Formally, EaaS can be formulated as the following pattern
	\begin{align}
		\begin{aligned}
			\langle\texttt{Enc}(\mathcal{P}), f\rangle &\rightarrow \texttt{Enc}(x_i) \rightarrow EO(\texttt{Enc}(x_i), \texttt{Enc}(\mathcal{P}), f)\\
			& \rightarrow \texttt{Enc}(\min f(x/\mathcal{P})),\\
			&\st x,x_i\in \mathcal{P},\; i\in[1, n], 
		\end{aligned} 
	\end{align}
	where $n$ is the population size. Fig. \ref{s4eaas} shows the structure of EaaS.
	\label{def:eaas}
\end{definition}

 
From Definition \ref{def:eaas} and Fig. \ref{s4eaas}, we see that EaaS does not ask the user to have the expertise and resources to solve a COP through EAs. To approximate the optimal solution of the COP, the user outsources operations to the cloud server. The cloud server is given encrypted data, so it fails to learn contents of the COP. In other words, EaaS enables the cloud server perform evolutionary operations over encrypted data and generates encrypted optimization solutions to protect the user's privacy. The key of EaaS is to support evolutionary operations on encrypted data. 

\subsection{Formulation for PEGA}
To validate the computing paradigm of EaaS, we take GA, a widely known EA, as an example to concrete EaaS, called PEGA. GA usually comprises 5 polynomial time operations: population initialization, evaluation, selection, crossover, and mutation, where the later four are regarded as evolutionary operators \cite{larranaga1999genetic}. A formal definition of PEGA can be as follow.
\begin{definition}[PEGA]
	A privacy-preserving genetic algorithm (PEGA) takes as input an encrypted COP $\texttt{Enc}(\mathcal{P})$ and its optimization function $f$, and outputs an encrypted optimization solution $\texttt{Enc}(\min f(x/\mathcal{P}))$, $\st x\in\mathcal{P}$. Formally, PEGA can be formulated as the following pattern
	\begin{align}
		\begin{aligned}
			\langle\texttt{Enc}(\mathcal{P}),f\rangle&\rightarrow\mathtt{I}\rightarrow\mathtt{E}\rightarrow[\mathtt{S}\rightarrow\mathtt{C}\rightarrow\mathtt{M}\rightarrow\mathtt{E}]\times t\\
			&\rightarrow \texttt{Enc}(\min f(x/\mathcal{P})),\\
			&\st x\in\mathcal{P},
		\end{aligned}	
	\end{align}
	where $\times$ indicates the repetition, and $t$ denotes the iteration times. $\mathtt{I}$, $\mathtt{E}$, $\mathtt{S}$, $\mathtt{C}$, and $\mathtt{M}$ indicate operations of population initialization, evaluation, selection, crossover, and mutation, respectively. Note that $\mathtt{I}$, $\mathtt{E}$, $\mathtt{S}$, $\mathtt{C}$, and $\mathtt{M}$ take as input encrypted data and output encrypted data.
	\label{def:pega}
\end{definition}
\begin{figure}
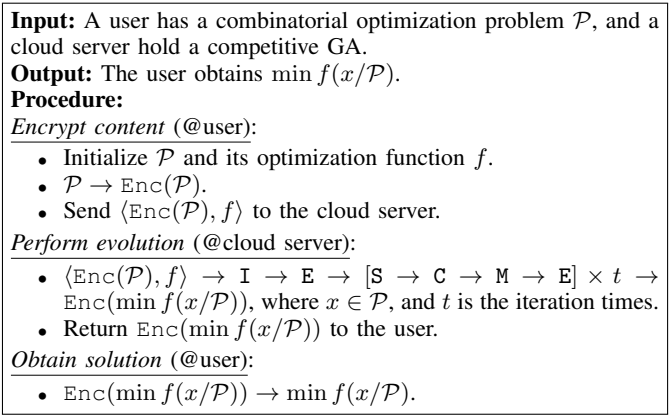

	\centering
	\small
	\begin{boxedminipage}{\linewidth}
		\textbf{Input:} A user has a combinatorial optimization problem $\mathcal{P}$, and a cloud server hold a competitive GA.\\
		\textbf{Output:} The user obtains $\min f(x/\mathcal{P})$.\\
		\textbf{Procedure:}\\
		\underline{\textit{Encrypt content} (@user)}:
		\begin{itemize}
			\item Initialize $\mathcal{P}$ and its optimization function $f$.
			\item $\mathcal{P} \rightarrow \texttt{Enc}(\mathcal{P})$.
			\item Send $\langle \texttt{Enc}(\mathcal{P}), f\rangle$ to the cloud server.
		\end{itemize}
		\underline{\textit{Perform evolution} (@cloud server)}:
		\begin{itemize}
			\item $\langle\texttt{Enc}(\mathcal{P}), f\rangle \rightarrow \mathtt{I} \rightarrow \mathtt{E} \rightarrow [\mathtt{S} \rightarrow \mathtt{C} \rightarrow \mathtt{M} \rightarrow \mathtt{E}]\times t \rightarrow \texttt{Enc}(\min f(x/\mathcal{P}))$, where $x\in\mathcal{P}$, and $t$ is the iteration times.
			\item  Return $\texttt{Enc}(\min f(x/\mathcal{P}))$ to the user.
		\end{itemize}
		\underline{\textit{Obtain solution} (@user)}:
		\begin{itemize}
			\item $\texttt{Enc}(\min f(x/\mathcal{P})) \rightarrow \min f(x/\mathcal{P})$.
		\end{itemize}
	\end{boxedminipage}
	\caption{Structure of PEGA.}
	\label{s4pega}
\end{figure}

From Definition \ref{def:eaas} and Definition \ref{def:pega}, we see that PEGA is to concrete $EO(\cdot)$ as $\mathtt{E}$, $\mathtt{S}$, $\mathtt{C}$, and $\mathtt{M}$. In next section, we elaborate on the design of PEGA, specially for how to execute evolutionary operations on encrypted data. 

\section{PEGA Design}
To self-contained, we first list threshold Paillier cryptosystem (THPC) used to encrypt the COP, and then give system model and threat model of PEGA. Next, details of PEGA design are illustrated.
\subsection{Primitive}
The detailed algorithms of THPC with (2, 2)-threshold decryption are listed as follows.

Key Generation (\texttt{KeyGen}): Let $\mathsf{p}=2\mathsf{p}'+1$ and $\mathsf{q}=2\mathsf{q}'+1$ be two big prime numbers with $\kappa$ bits (e.g., $\kappa=512$), where $\mathsf{p}',\mathsf{q}'$ are also prime numbers. The public key is denoted by $pk=(\mathsf{g}, N)$, where $N=\mathsf{p}\mathsf{q}$ and $\mathsf{g}=N+1$. The private key is denoted by $sk=(\lambda, \mu)$, where $\lambda=\lcm(\mathsf{p}-1, \mathsf{q}-1)$ and $\mu=\lambda^{-1}\mod N$. Particularly, the private key is split into $\lambda_1$ and $\lambda_2$ two partially private keys, $\st$, $\lambda_1+\lambda_2=0\mod\lambda$ and $\lambda_1+\lambda_2=1\mod N$. As $\mu=\lambda^{-1}\mod N$, $\lambda\mu=0\mod\lambda$ and $\lambda\mu=1\mod N$. Let $\lambda_1$ be a random integer in the interval $(0, \lambda N)$ and $\lambda_2=\lambda\mu-\lambda_1\mod\lambda N$.

Encryption (\texttt{Enc}): Take as input a message $m\in\mathbb{Z}_N$ and $pk$, and output $\lb m\rb\leftarrow\texttt{Enc}(pk, m)=(1+mN)\cdot r^N\mod N^2$, where $\lb m\rb=\lb m\mod N\rb$ and $r$ is a random number in $\mathbb{Z}_N^*$.

Decryption (\texttt{Dec}): Take as input a ciphertext $\lb m\rb$ and $sk$, and output $m\leftarrow\texttt{Dec}(sk, \lb m\rb)=L(\lb m\rb^\lambda\mod N^2)\cdot\mu\mod N$, where $L(x)=\frac{x-1}{N}$.

Partial Decryption (\texttt{PDec}): Take as input a cihpertext $\lb m\rb$ and a partially private key $\lambda_i$ ($i=$1 or  2), and output $M_i\leftarrow\texttt{PDec}(\lambda_i, \lb m\rb)=\lb m\rb^{\lambda_i}\mod N^2$.

Threshold Decryption (\texttt{TDec}): Take as input partially decrypted ciphtexts $M_1$ and $M_2$, and output $m\leftarrow\texttt{TDec}(M_1, M_2)=L(M_1\cdot M_2\mod N^2)$. 

The homomorphic operations on ciphertexts supported by THPC are described as follows.
\begin{itemize}
	\item Additive homomorphism: $\texttt{Dec}(sk, \lb m_1+m_2\mod N\rb)=\texttt{Dec}(sk, \lb m_1\rb\cdot\lb m_2\rb)$;
	\item Scalar-multiplication homomorphism: $\texttt{Dec}(sk, \lb c\cdot m\!\!\mod\! N\rb)$
	$=\texttt{Dec}(sk, \lb m\rb^c)$ for $c\in\mathbb{Z}_N$.
\end{itemize}
On the basis of additive homomorphism and scalar-multiplication homomorphism, THPC enables subtraction over encrypted data. Specifically, $\texttt{Dec}(sk, \lb m_1-m_2\rb)=\texttt{Dec}(sk, \lb m_1\rb\cdot\lb m_2\rb^{-1})$.

Note that any single partially private key fails to decrypt any ciphertexts. Also, as operations over ciphertexts encrypted by \texttt{Enc} require to perform a $\!\!\!\mod N^2$ operation, for brevity, we will omit $\!\!\!\mod N^2$ in the rest of this paper. Just like the Paillier cryptosystem \cite{paillier1999public}, THPC only works on integer. To effectively handle floating-point numbers, a given floating-point number $x$ be encoded as $\frac{x^\uparrow}{x^\downarrow}\cdot2^\ell$, where $\ell$ is a constant, for example, $\ell=53$ is used to encode a 64-bit floating-point number. In this paper, if a message $m$ to be encrypted is a floating-point number, it is encrypted as $\lb\frac{m^\uparrow}{m^\downarrow}\cdot2^\ell\rb$, i.e., $\lb  m\rb=\lb\frac{m^\uparrow}{m^\downarrow}\cdot2^\ell\rb$. To simplify notation, we use $\lb m\rb$ to denote $\lb\frac{m^\uparrow}{m^\downarrow}\cdot2^\ell\rb$ in the rest of paper.

\subsection{System Model and Threat Model}
In our system, we consider a user outsources an encrypted COP to twin-cloud servers (i.e., $S_1$ and $S_2$). Twin-cloud servers jointly provide a privacy-preserving GA service to solve the encrypted COP through performing secure two-party computations. The user obtains an encrypted optimization solution from $S_1$. As depicted in Fig. \ref{fig:sysmod}, FEGA comprises a user and twin cloud servers.
\begin{figure}
	\centering
	\includegraphics[width=0.81\linewidth]{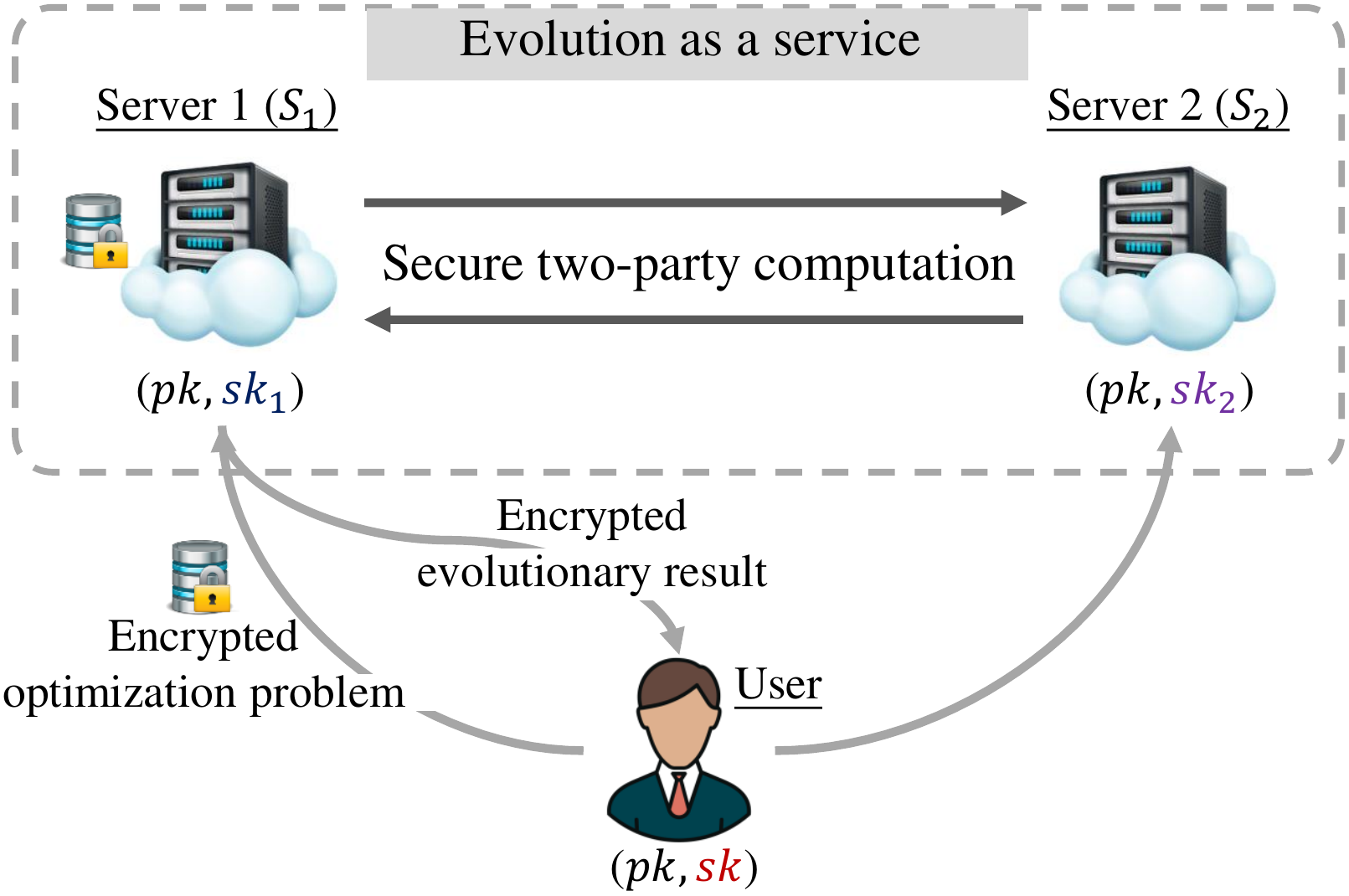}
	\caption{PEGA system model.}
	\label{fig:sysmod}
\end{figure}
\begin{itemize}
	\item \textbf{User}: The user has a COP $\mathcal{P}$ to be solved and outsources the problem to cloud servers with powerful computation and sufficient resources. To protect privacy, the user initializes a public/private pair ($pk, sk$) of THPC, and then encrypts the problem with the public key $pk$ as $\lb\mathcal{P}\rb$. Also, in order to enable cloud servers performing evolutionary operators over encrypted data, the user splits the private key $sk$ into two partially private keys $sk_1$ and $sk_2$ and sends them into $S_1$ and $S_2$, respectively.
	
	\item \textbf{Cloud server 1 ($S_1$)}: $S_1$ takes charge of storing $\lb\mathcal{P}\rb$ sent from the user. $S_1$ and $S_2$ jointly perform secure two-party computation protocols over encrypted data to support the operations of GA. Note that $S_1$ can directly execute certain homomorphic operations (e.g., additive homomorphism and scalar-multiplication homomorphism) over encrypted data supported by THPC.
	
	\item \textbf{Cloud server 2 ($S_2$)}: $S_2$ is responsible for assisting $S_1$ to perform the operations of GA in a privacy-preserving manner.
\end{itemize}

In the system of PEGA, the computation is outsourced to cloud servers. According to the outsourced computation situation \cite{shan2018practical}, there is one type of adversary that attempts to obtain the user's private information, i.e., contents of the COP. The adversary involves either $S_1$ or $S_2$. Inspired by prior work \cite{rathee2020cryptflow2,mohassel2017secureml}, we assume either $S_1$ and $S_2$ are \textit{curious-but-honest} (or say \textit{semi-honest}), i.e., they follow required computation protocols and perform required computations correctly, but may try to obtain the user's private information with the help of encrypted TSP and intermediate computation results. Note that $S_1$ and $S_2$ do not share their partially private keys and parameters in a non-colluding twin-server architecture \cite{rathee2020cryptflow2,mohassel2017secureml}. The assumption of no-colluding twin-cloud servers is reasonable. Anyone cloud server shares the private parameters or intermediate computation results with the other one, which means the cloud gives data leakage evidence to the other one. Arguably, for its own commercial interests, any cloud server is unwilling to provide data leakage evidence to other.

\subsection{Overview of PEGA}
In this section, we give a high-level description of our proposed PEGA. The goal of PEGA is to perform operations of GA over encrypted data and output an encrypted optimization solution. As shown in Fig. \ref{fig:overview}, PEGA consists of 5 polynomial-time operations, i.e., \textsc{Gen\_Initial\_Pop}, \textsc{Evaluation}, \textsc{Selection}, \textsc{Crossover}, and \textsc{Mutation}, and their briefly description is given as follows.
\begin{figure*}
	\centering
	\includegraphics[width=0.9\linewidth]{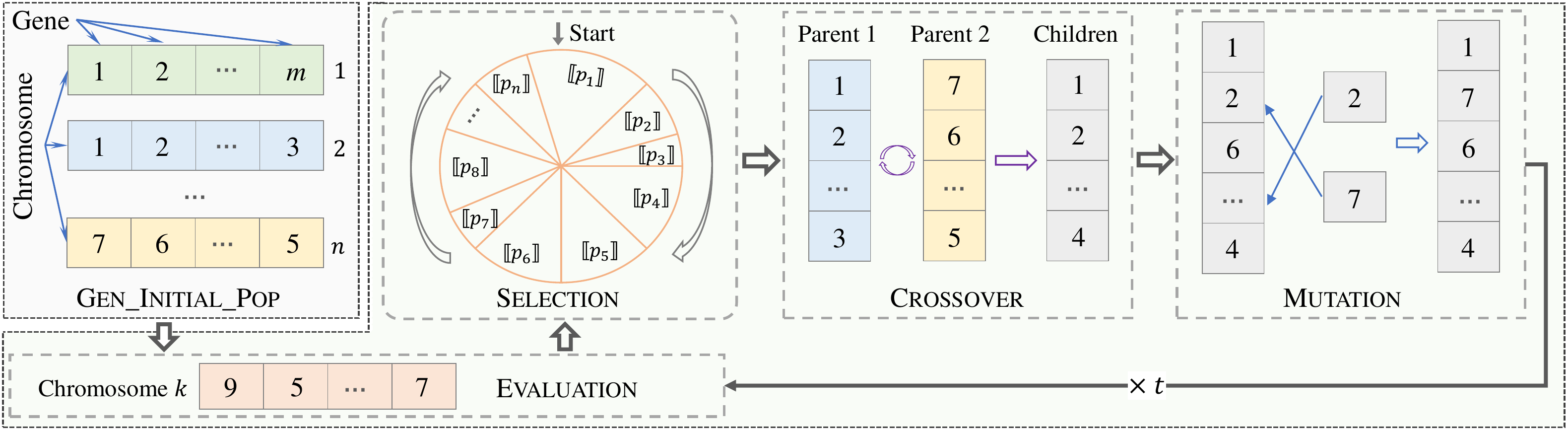}
	\caption{Overview of PEGA.}
	\label{fig:overview}
\end{figure*}
\begin{itemize}
	\item \textsc{Gen\_Initial\_Pop}: Given an encrypted COP $\lb\mathcal{P}\rb$, \textsc{Gen\_Initial\_Pop} randomly generates a population compromising $n$ individuals. Each individual is denoted by a chromosome. Each chromosome consists of $m$ genes. Formally, \textsc{Gen\_Initial\_Pop} takes as input $\lb\mathcal{P}\rb$, and outputs $n$ encrypted chromosomes denoted by $\{\lb x_1\rb, \cdots, \lb x_n\rb\}$, $\st x_i\in\mathcal{P}$ and $i\in[1, n]$.
	
	\item \textsc{Evaluation}: Given $n$ encrypted chromosomes, \textsc{Evaluation} firstly computes the fitness value of each encrypted chromosome. Specifically, \textsc{Evaluation} utilizes the homomorphism of THPC to obtain encrypted fitness value of each encrypted chromosome according to the optimization function $f$. Formally, \textsc{Evaluation} takes as input $\{\lb x_1\rb,\cdots, \lb x_n \rb\}$ and $f$, and outputs $\{\lb f(x_1)\rb, \cdots, \lb f(x_n)\rb\}$. Also, \textsc{Evaluation} outputs the optimal chromosome holding minimum fitness value. To this end, we carefully design a secure comparison protocol (\texttt{SecCmp}) that can compare $\lb f(x_i) \rb$ and $\lb f(x_j) \rb$ ($i,j\in[1, n]$ and $i \neq j$). Formally, given $\lb f(x_i) \rb$ and $\lb f(x_j) \rb$, \texttt{SecCmp} outputs $\lb f(x_i) \rb$, where $f(x_i) \leq f(x_j)$. Thus, given $\{\lb f(x_1) \rb, \cdots, \lb f(x_n) \rb\}$, \textsc{Evaluation} can output $\lb f(x_k) \rb$ via \texttt{SecCmp}, where $\lb f(x_k)\rb$, $\st f(x_k)=\min\{f(x_1,), \cdots, f(x_n)\}$ and $k\in[1, n]$.
	
	\item \textsc{Selection}: For $n$ encrypted chromosomes, \textsc{Selection} explores the well-studied fitness proportionate selection operator \cite{zhong2005comparison} to select dominant individuals. Specifically, \textsc{Selection} firstly computes an encrypted probability for each individual. After that, \textsc{Selection} performs operations of fitness proportionate selection over encrypted probabilities. The critical operations of \textsc{Selection} include addition, division, and comparison on encrypted data. To enable division on encrypted data, we propose a secure division protocol (\texttt{SecDiv}). Formally, given $\lb a \rb$ and $\lb b \rb$, \texttt{SecDiv} outputs $\lb \frac{a}{b} \rb$.
	
	\item \textsc{Crossover}: Choosing two chromosomes as parents, \textsc{Crossover} performs a crossover operator to enable two parents crossover generating children. Roughly speaking, \textsc{Crossover} exchanges some genes of two parents to generate new chromosomes.
	
	\item \textsc{Mutation}: For each encrypted chromosome, \textsc{Mutation} exchanges some genes of the chromosome to generate a new chromosome.
\end{itemize}

When iterations is used as the termination condition, except for \textsc{Gen\_Initial\_Pop}, \textsc{Evaluation}, \textsc{Selection}, \textsc{Crossover}, and \textsc{Mutation} require to repeat. PEGA takes encrypted data as input, generates encrypted intermediate results, and outputs encrypted optimization solution to protect privacy.

\subsection{Privacy-preserving Protocols for PEGA}
In this section, we first elaborate on the secure division protocol (\texttt{SecDiv}) and the secure comparison protocol (\texttt{SecCmp}) that are used to construct PEGA. Next, through \texttt{SecDiv} and \texttt{SecCmp}, we design a secure probability algorithm (\texttt{SecPro}) and a secure fitness proportionate selection algorithm (\texttt{SecFPS}) to support \textsc{Selection} on encrypted data. Also, \texttt{SecCmp} enables \textsc{Evaluation} over encrypted data.

\subsubsection{\textbf{Secure Division Protocol (\texttt{SecDiv})}}
Given $\lb x\rb$ and $\lb y\rb$, where $y\neq 0$, \texttt{SecDiv} outputs $\lb\frac{x}{y}\rb$. The key idea of \texttt{SecDiv} is to convert division to scalar multiplication. Specifically, for any integers $x,y$, we have $\frac{x}{y}=\frac{x\cdot\mathtt{y}}{2^\ell}$, where $\mathtt{y}=\frac{1}{y}\cdot2^\ell$ and $\mathtt{y}$ is an integer. Formally, $\texttt{SecDiv}(\lb x\rb,\lb y\rb)\rightarrow\lb\frac{x}{y}\cdot2^\ell\rb$. \texttt{SecDiv} consists of three steps. 
\begin{itemize}
	\item [(1)] $S_1$ calls \texttt{PDec} to partially decrypt $\lb y\rb$ to get $[y]_1$. Next, $S_1$ sends $\langle\lb y\rb, [y]_1\rangle$ to $S_2$.
	
	\item [(2)] $S_2$ calls \texttt{PDec} to partially decrypt $\lb y\rb$ to get $[y]_2$ and then calls \texttt{TDec} to obtain $y$ with $[y]_1$ and $[y]_2$. After that, $S_2$ computes $\mathtt{den}=\frac{1}{y}$. Next, $S_2$ encodes $\mathtt{den}$ as $\mathtt{y}=\frac{\mathtt{den}^\uparrow}{\mathtt{den}^\downarrow}\cdot2^\ell$. Finally, $S_2$ returns $\mathtt{y}$ to $S_1$. Clearly, $\mathtt{y}$ is an integer.
	
	\item [(3)] $S_1$ computes $\lb\frac{x}{y}\rb\leftarrow\lb x\rb^\mathtt{y}$.
\end{itemize}

\subsubsection{\textbf{Secure Comparison Protocol (\texttt{SecCmp})}}
Given $\lb x\rb$ and $\lb y\rb$, \texttt{SecCmp} outputs 0 when $x\geq y$, 1 otherwise ($x<y$). Formally, $\texttt{SecCmp}(\lb x\rb,\lb y\rb)\rightarrow\{0,1\}$. \texttt{SecCmp} consists of three steps.
\begin{itemize}
	\item [(1)] $S_1$ generates a random number $\pi\in\{0,1\}$ through tossing a coin. $S_1$ computes
	\begin{equation}
		\lb\Delta\rb=\left\{
		\begin{aligned}
			&(\lb x\rb\cdot\lb y\rb^{-1})^{r_1}\cdot\lb r_1+r_2\rb, &\mbox{for $\pi=0$}\\
			&(\lb y\rb\cdot\lb x\rb^{-1})\cdot\lb r_2\rb, &\mbox{for $\pi=1$}
		\end{aligned}
		\right.
	\end{equation}
	where $r_1,r_2$ are two randomly integers, $\st$, $r_1\leftarrow\{0,1\}^\sigma\setminus\{0\}$, $r_1+r_2>\frac{N}{2}$, and $r_2\leq\frac{N}{2}$. $\sigma$ is secure parameter, e.g., $\sigma=128$. Next, $S_1$ calls \texttt{PDec} to get $[\Delta]_1$, and sends $\langle\lb\Delta\rb, [\Delta]_1\rangle$ to $S_2$.
	
	\item [(2)] $S_2$ calls \texttt{PDec} to get $[\Delta]_2$ and then calls $\texttt{TDec}$ to obtain $\Delta$ with $[\Delta]_1$ and $[\Delta]_2$. If $\Delta>\frac{N}{2}$, $S_2$ sets $u=0$, otherwise, $u=1$. Finally, $S_2$ sends $u$ to $S_1$.
	
	\item [(3)] $S_1$ obtains the comparison result by computing $\pi\oplus u$. When $\pi=0$, we have $\pi\oplus u=0$ when $u=0$, otherwise, $\pi\oplus u=1$. When $\pi=1$, we have $\pi\oplus u=1$ when $u=0$, otherwise, $\pi\oplus u=0$.
\end{itemize}

Clearly, given $\{\lb f(x_1)\rb,\cdots,\lb f(x_n)\rb\}$, it is easy to implement \textsc{Evaluation} by calling \texttt{SecCmp}. Specifically, \textsc{Evaluation} performs $n$ comparison operations on encrypted data to obtain the optimal chromosome holding minimum fitness values.

For brevity, we utilize $v_i$ to denote $f(x_i)$. According to the fitness proportionate selection operator \cite{zhong2005comparison}, it requires to compute each individual's probability. Thus, the individual's probability is denoted by
\begin{align}
	p_i\leftarrow\frac{v_i}{\sum_{j=1}^n v_j}.
	\label{eq:probability}
\end{align}
However, to protect users' privacy, the cloud server only obtains $\lb v_i\rb$. Given $\{\lb v_1\rb, \cdots, \lb v_n \rb\}$, it is not trivial for the cloud server to compute $\frac{\lb v_i\rb}{\lb \sum_{j=1}^n v_j\rb}$. Fortunately, the proposed \texttt{SecDiv} offers a potential solution. Specifically, through the proposed \texttt{SecDiv}, we design a secure probability algorithm (\texttt{SecPro}) to compute each individual's probability on encrypted data. Given $n$ encrypted fitness values $\{\lb v_1\rb,\cdots,\lb v_n\rb\}$, \texttt{SecPro} outputs $n$ encrypted probabilities $\{\lb p_1\rb,\cdots,\lb p_n\rb\}$. Formally, $\texttt{SecPro}(\{\lb v_1\rb,\cdots,\lb v_n\rb\})\rightarrow\{\lb p_1\rb,\cdots,\lb p_n\rb\}$. As shown in Algorithm \ref{spro}, \texttt{SecPro} consists of three steps.
\begin{itemize}
	\item [(1)] $S_1$ firstly computes $\lb sum \rb \leftarrow \Pi_{j=1}^n\lb v_j\rb$ by the additive homomorphism of THPC, so we have $sum \leftarrow \sum_{i=1}^n v_i$. After that, $S_1$ calls \texttt{PDec} to partially decrypt $\lb sum\rb$ to get $[sum]_1$. Next, $S_1$ sends $\langle\lb sum\rb, [sum]_1\rangle$ to $S_2$.
	
	\item [(2)] $S_2$ calls \texttt{PDec} to partially decrypt $\lb sum\rb$ to get $[sum]_2$ and then calls \texttt{TDec} to obtain $sum$ with $[sum]_1$ and $[sum]_2$. After that, $S_2$ computes $den=\frac{sum}{2^\ell}$ and $\mathtt{den}=\frac{1}{den}$. Next, $S_2$ encodes $\mathtt{den}$ as $\mathtt{D}=\frac{\mathtt{den}^\uparrow}{\mathtt{den}^\downarrow}\cdot2^\ell$. Finally, $S_2$ returns $\mathtt{D}$ to $S_1$. Clearly, $\mathtt{D}$ is an integer.
	
	\item [(3)] $S_1$ computes $\lb p_i\rb\leftarrow(\lb v_i\rb)^\mathtt{D}$ for $i\in[1, n]$. It can be seen that $p_i\leftarrow\frac{v_i}{sum} \cdot 2^\ell$.
\end{itemize}
\begin{algorithm}
	\SetAlgoLined
	\KwData{$S_1$ has $\{\lb v_1\rb,\cdots,\lb v_n\rb\}$.}\\
	\KwResult{$S_1$ obtains $\{\lb p_1\rb,\cdots,\lb p_n\rb\}$.}
	
	{Step 1. $S_1$  computes
		\begin{itemize}
			\item $\lb sum \rb \leftarrow \Pi_{j=1}^n \lb v_j \rb$;
			\item $[sum]_1\leftarrow\texttt{PDec}(sk_1, \lb sum\rb)$;
			\item and sends $\langle\lb sum\rb, [sum]_1\rangle$ to $S_2$.
	\end{itemize}}
	{Step 2. $S_2$ computes
		\begin{itemize}
			\item $[sum]_2\leftarrow\texttt{PDec}(sk_2, \lb sum \rb)$ and $sum\leftarrow\texttt{TDec}([sum]_1,[sum]_2)$;
			\item $den=\frac{sum}{2^\ell}$, $\mathtt{den}=\frac{1}{den}$, and $\mathtt{D}=\frac{\mathtt{den}^\uparrow}{\mathtt{den}^\downarrow}\cdot2^\ell$;
			\item and sends $\mathtt{D}$ to $S_1$.
	\end{itemize}}
	{Step 3. $S_1$ computes
		\begin{itemize}
			\item $\lb p_i\rb\leftarrow\lb v_i\rb^{\mathtt{D}}$ for $i\in[1, n]$.
	\end{itemize}}		
	\caption{$\texttt{SecPro}(\{\lb D_i\rb,\cdots,\lb D_n\rb\},\lb sum\rb)\rightarrow\{\lb p_1\rb,\cdots,\lb p_n\rb\}$.}
	\label{spro}
\end{algorithm}

From Algorithm \ref{spro}, we see that $\texttt{Dec}(sk, \lb p_i\rb)=\frac{v_i}{\sum_{j=1}^n v_j}\cdot2^{2\ell}$. If $\frac{v_i}{\sum_{j=1}^n v_j}>\frac{v_j}{\sum_{i=1}^n v_i}$, we have $\texttt{Dec}(sk, \lb p_i\rb)>\texttt{Dec}(sk, \lb p_j\rb)$. In other words, Algorithm \ref{spro} does not change the numerical relationship among probabilities of individuals.

Also, to enable fitness proportionate selection on encrypted data, we construct a secure fitness proportionate selection algorithm (\texttt{SecFPS}) via \texttt{SecCmp}. Given $n$ encrypted probabilities $\{\lb p_1\rb,\cdots,\lb p_n\rb\}$, \texttt{SecFPS} outputs $n$ individuals. The key idea of \texttt{SecFPS} is to perform $n$ comparison operations over encrypted data. Formally, $\texttt{SecFPS}(\lb p_1\rb,\cdots,\lb p_n\rb)\rightarrow Pop$, where $Pop$ represents a population consisting of $n$ individuals. As shown in Algorithm \ref{sfps}, \texttt{SecFPS} consists of three steps.
\begin{itemize}
	\item [(1)] $S_2$ generates $n$ encrypted random numbers $\{\lb r_1\rb,\cdots,\lb r_n\rb\}$ and sends them to $S_1$. Note that as $\texttt{Dec}(sk, \lb p_i\rb)=\frac{v_i}{\sum_{i=1}^n v_i}\cdot2^{2\ell}$, the random number $r_i$ multiplies by $2^{2\ell}$ to reach the same order of magnitude as $p_i$.
	
	\item [(2)] $S_1$ computes $\lb p_i\rb\leftarrow\lb p_{i-1}\rb\cdot\lb p_i\rb$ for $i\in[2, n]$. Thus, we have $p_i=\sum_{j=1}^i p_j$. In other words, $S_1$ produces a ciphertext set of orderly sequence $\{p_1,\cdots,\sum_{i=1}^n p_i\}$.
	
	\item [(3)] $S_1$ and $S_2$ jointly perform a binary search over encrypted data to find the individual $i$ $\st$, $r_j\leq p_i$ and $r_j>p_{i-1}$ ($i,j\in[1,n]$ through calling \texttt{SecCmp}. Repeat step (3) until generating $n$ individuals.
\end{itemize}
\begin{algorithm}
	\SetAlgoLined
	\KwData{$S_1$ has $\{\lb p_1\rb, \cdots,\lb p_n\rb\}$.}\\
	\KwResult{$S_1$ obtains $Pop$.}\\
	$S_2$ computes $\lb r_i\rb\leftarrow\texttt{Enc}(pk, r_i\cdot2^{2\ell})$ for $i\in[1, n]$, where $r_i$ is a random number in $(0,1)$ and then sends $\{\lb r_1\rb,\cdots,\lb r_n\rb\}$ to $S_1$;\\
	\For{$i=2$ to $n$}{
		$S_1$ computes $\lb p_i\rb\leftarrow\lb p_{i-1}\rb\cdot\lb p_i\rb$;
	}
	\For{$j=1$ to $n$}{
		$S_1$ and $S_2$ jointly perform $i\leftarrow\textsc{FindIndividual}(\lb r_j\rb, 1, n)$\;
		$S_1$ adds $i$ to $Pop$\;
	}
	\FuncName{FindIndividual}$(\lb r\rb, low, high)$ 
	\Begin{
		$i\leftarrow\lfloor\frac{low+high}{2}\rfloor$\;
		\If{$\texttt{SecCmp}(\lb p_i\rb, \lb r\rb)$ returns $1$}{
			\Return \textsc{FindIndividual}$(\lb r\rb, i+1, high)$\;
		}
		\Else{
			\If{$i<2$}{
				\Return $1$\;
			}
			\Else{
				\If{$\texttt{SecCmp}(\lb p_{i - 1}\rb, \lb r\rb)$ returns $1$}{
					\Return $i$\;
				}
				\Else{
					\Return \textsc{FindIndividual}$(\lb r\rb, low, i-1)$\;
				}
			}
		}
	}
	\caption{$\texttt{SecFPS}(\{\lb p_1\rb,\cdots,\lb p_n\rb\})\rightarrow Pop$.}
	\label{sfps}
\end{algorithm}

Note that the proposed \texttt{SecCmp} can be used to construct secure selection operators, such as secure tournament selection, secure elitism selection. The critical operation for tournament selection and elitism selection is to compare fitness values of individuals \cite{zhong2005comparison}, which is supported by \texttt{SecCmp}.

\section{PEGA for TSP}
This section takes TSP, a widely known COP, as an example to demonstrate the idea of EaaS through the proposed PEGA.

\subsection{Problem Encryption}
Given a list of cities and the traveling cost between each possible city pair, TSP is to find the shortest possible route that visits each city exactly once and returns to the origin city. Formally, as shown in Fig. \ref{fig:tsp}, the TSP can be denoted by a strictly upper triangular matrix, where the entry in the matrix represents the traveling cost of a city pair. For example, "6" is the traveling cost between WDC and CHI.
\begin{figure}
	\centering
	\includegraphics[width=\linewidth]{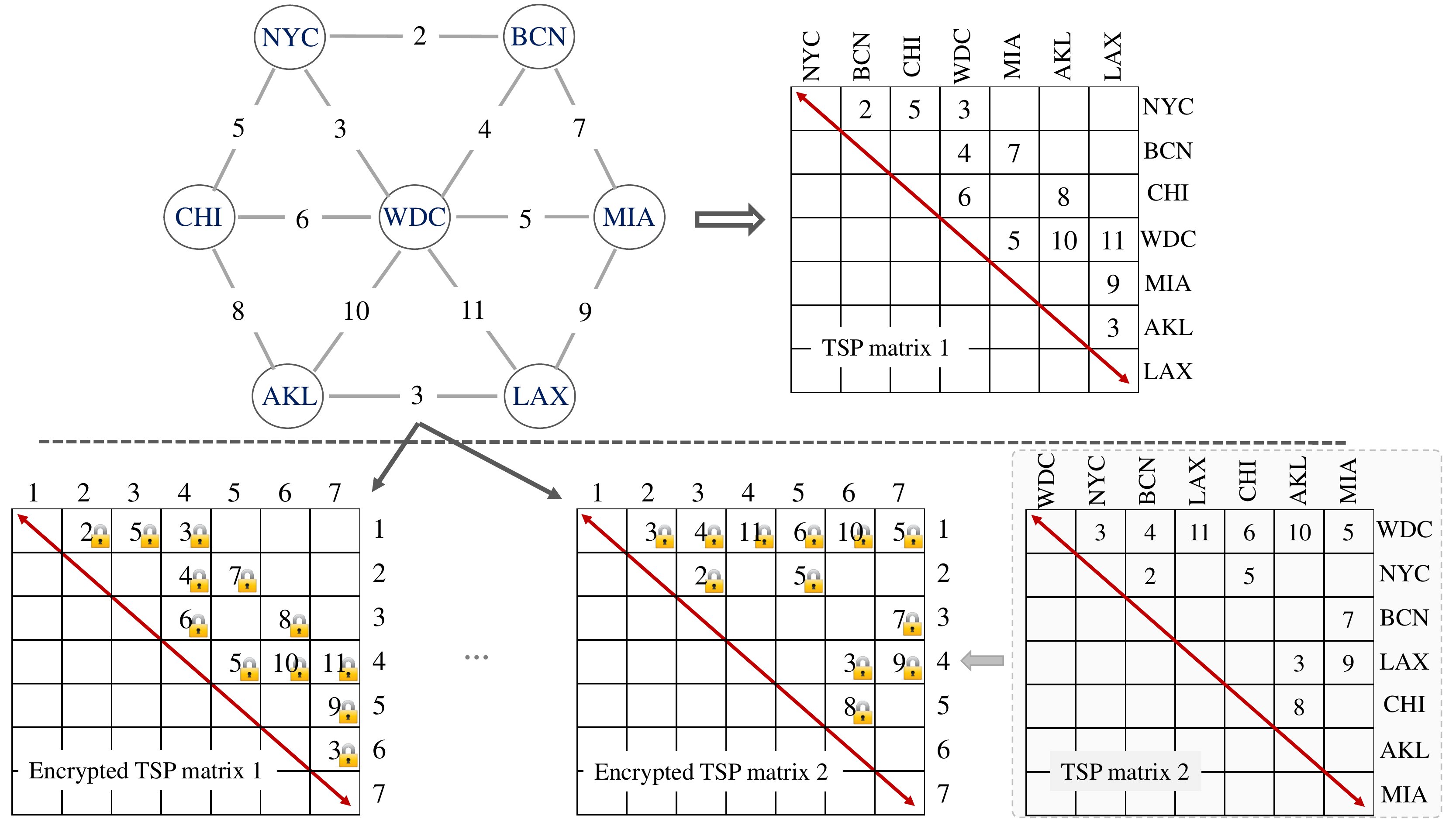}
	\caption{An example of encrypted TSP.}
	\label{fig:tsp}
\end{figure}

\begin{definition}[Encrypted TSP]
	An encrypted TSP means the city list and traveling cost between each possible city pair of a plaintext TSP are mapped into random numbers through cryptographical functions; it requires finding the shortest possible route that visits each city once and returns to the origin city over encrypted city list and traveling costs. Formally, let $\mathbf{M}$ be a TSP matrix, the encrypted TSP is denoted by 
	\begin{equation}
		\lb\mathbf{M}\rb\leftarrow F(\mathbf{M}),
	\end{equation}
	where $F(\cdot)$ represents a family of cryptographical functions.
\end{definition}

Clearly, generating an encrypted TSP requires encrypting the list of cities and the traveling cost between possible city pairs. On the one hand, as described PEGA, we exploit THPC to encrypt TSP. Specifically, each entry of $\mathbf{M}$ is encrypted through THPC. On the other hand, a one-way hash function can serve as the cryptographical function to map the city list into random numbers. However, a hash function $H(\cdot)$ always generates the same output when the same input is given. If the cloud server knows all cities, it is easy to obtain the city list of $\mathbf{M}$ through executing efficient hashing operations. Also, the output of $H(\cdot)$ is usually more than 256 bits, which incurs a high communication and storage cost. As depicted in Fig.\ref{fig:tsp}, we observe that given a TSP, its representation matrix is not unique. Inspired by this, we assume all cities are denoted by $\mathbb{C}$, and their mapping is denoted by $\mathbb{N}$, where $\mathbb{N}$ is the set of natural numbers. Thus, in this paper, we define a one-way function $\phi(\cdot)$ that randomly maps one city into one unique natural number. For example, in Fig. \ref{fig:tsp}, "WDS" is mapped into "4" and "1" in encrypted TSP 1 and encrypted TSP 2, respectively. Formally, for any item (e.g., "1") of $\mathbb{N}$, it can represent any city. Thus, when the city list of a TSP is randomly mapped into a natural number, the cloud server fails to obtain the city list.

Fig. \ref{fig:tsp} gives the storage structure of encrypted TSP. Specifically, the first row and the first column of $\lb\mathbf{M}\rb$ denotes the city index. $d_{i,j}$ represents the traveling cost between city $i$ and city $j$. "0" indicates that the two cities are unreachable, whereas it indicates two cities are reachable. Assume the size of cities be $m$, the objective function $f$ can be denoted by
\begin{align}
	f(x)=\underbrace{d_{i,j} + d_{j,k} + \cdots + d_{l, i}}_{m-1},
\end{align}
where $x$ is a possible route. Finally, a user outsources $\langle\lb \mathbf{M} \rb, f(\cdot)\rangle$ to $S_1$.

\subsection{Problem Solving via PEGA}
In this section, we elaborate on how to solve an encrypted TSP through PEGA.

\subsubsection{Initialization}
Given $\lb \mathbf{M} \rb$, $S_1$ initializes $n$ encrypted chromosomes denoted by $\{\lb x_1 \rb, \cdots, \lb x_n \rb\}$, where $\lb x_i \rb$ is denoted by an array of index of $\lb \mathbf{M} \rb$, such as "$9$-$5$-$\cdots$-$7$". As the one-way function $\phi(\cdot)$ is adopted, the index of $\lb \mathbf{M} \rb$ does not disclose the city. Thus, given $\lb \mathbf{M} \rb$, $S_1$ is able to generate $n$ encrypted chromosomes to initialize a population.

\subsubsection{Evaluation}
Given $\lb \mathbf{M} \rb$ and $\{\lb x_1 \rb, \cdots, \lb x_n \rb\}$, $S_1$ can compute $\{\lb f(x_1) \rb, \cdots, \lb f(x_n) \rb\}$. Specifically, without loss of generality, let $\lb x_i \rb$ be denoted by "$i$-$j$-$\cdots$-$k$", $S_1$ computes an encrypted fitness value $\lb v_i \rb$ as
\begin{align}
	\lb v_i \rb \leftarrow \underbrace{\lb d_{i,j} \rb \cdot \cdots \cdot \lb d_{k,i}\rb}_{m-1}.
\end{align}
As the additive homomorphsim of THPC, we see that $v_i=d_{i,j}+\cdots+d_{k,i}$. Thus, $S_1$ can calculate and generate $\{\lb v_1 \rb, \cdots, \lb v_n \rb\}$. Next, $S_1$ and $S_2$ jointly compute and find out the encrypted chromosome holding minimum fitness value by calling \texttt{SecCmp}. Specifically, without loss of generality, assume $v_i=\min\{v_1, \cdots, v_n\}$, i.e., $v_i\leq \{v_j\}_{j=1,j \neq i}^n$, $S_1$ outputs $\lb x_i \rb$ denoted by "$i$-$j$-$\cdots$-$k$" and sets it as the optimal chromosome.

\subsubsection{Selection}
$S_1$ can choose different selection operators, such as fitness proportionate selection, tournament selection, elitism selection, to perform a selection operator. In here, we consider $S_1$ utilizes the fitness proportionate selection operator as the selection operator. Specifically, $S_1$ firstly cooperates with $S_2$ to obtain $\{\lb p_1 \rb, \cdots, \lb p_n \rb\}$ by calling \texttt{SecPro}, where $p_i$ is the probability of the individual $i$ ($i\in[1, n]$). After that, $S_1$ teams with $S_2$ to generate a new population by calling \texttt{SecFPS}.

\subsubsection{Crossover}
Given encrypted chromosomes $\{\lb x_1 \rb, \cdots, \lb x_n \rb\}$, $S_1$ can adopt the conventional crossover operator (such as edge recombination crossover operator, ERX \cite{larranaga1999genetic}) to generate children. Assume $S_1$ chooses $\lb x_i \rb$ denoted by "$i$-$j$-$\cdots$-$k$" and $\lb x_j \rb$ denoted by "$j$-$i$-$\cdots$-$l$" as two parent chromosomes, it is easy for $S_1$ to generate two children by calling ERX \cite{larranaga1999genetic}. 

\subsubsection{Mutation}
Given encrypted chromosomes, $S_1$ is able to perform mutation operations on $\lb x_i \rb$. Specifically, $S_1$ can change the element of $\lb x_i \rb$.

\section{Results of Privacy Analysis and Experiment}
\subsection{Privacy Analysis}
THPC \cite{lysyanskaya2001adaptive} have been proved to be semantically secure. Thus, the homomorphic operations performed by $S_1$ do not disclose the user's private data. In this paper, we carefully design \texttt{SecDiv} and \texttt{SecCmp} based on a non-colluding twin-server architecture to select dominant individuals in a privacy-preserving manner. In this section, we demonstrate \texttt{SecDiv} and \texttt{SecCmp} are secure to perform division and comparison over encrypted data.

\begin{theorem}
	\label{tho:sdiv}
	Given $\lb x\rb$ and $\lb y \rb$, where $y\neq 0$, \texttt{SecDiv} does not disclose $\frac{x}{y}$.
\end{theorem}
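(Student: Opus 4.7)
The plan is to verify the theorem in the semi-honest (curious-but-honest) simulation framework that the paper explicitly adopts for its threat model: I would argue that the view of each of $S_1$ and $S_2$ during \texttt{SecDiv} can be simulated without knowledge of $\frac{x}{y}$, and hence neither party can recover $\frac{x}{y}$. Because $x$ itself is never transmitted to $S_2$ in the clear, my focus is on ruling out the quotient, not on hiding $y$.

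First I would enumerate the two views explicitly. The view of $S_1$ consists of its inputs $\lb x\rb,\lb y\rb$, the partial decryption $[y]_1$ it computes from its share $sk_1$, the integer $\mathtt{y}$ returned by $S_2$, and the output ciphertext $\lb x/y\rb=\lb x\rb^{\mathtt{y}}$. The view of $S_2$ consists of the pair $\langle\lb y\rb,[y]_1\rangle$ received from $S_1$, its own partial decryption $[y]_2$, and the plaintext $y$ obtained via \texttt{TDec}. Then for $S_1$ I would invoke the (2,2)-threshold property of THPC to observe that possession of $sk_1$ alone is insufficient to decrypt any ciphertext, so by the semantic security of THPC both $\lb x\rb$ and the computed $\lb x/y\rb$ are computationally indistinguishable from fresh encryptions; the only additional leak to $S_1$ is the scalar $\mathtt{y}\approx 2^{\ell}/y$, which depends on $y$ alone and is independent of $x$, so $S_1$'s view can be simulated from $(\lb x\rb,\lb y\rb,y)$ without any access to $x$ or $\frac{x}{y}$.

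For $S_2$ I would point out that although $y$ is deliberately revealed (this is the design choice that turns division into a scalar multiplication), the message $S_2$ receives contains no function of $x$: neither $\lb x\rb$ nor any ciphertext derived from $x$ is ever transmitted to $S_2$ in \texttt{SecDiv}. Thus a simulator for $S_2$ needs only $y$ to produce a perfectly distributed transcript. Since $S_2$'s view is statistically independent of $x$, the quotient $\frac{x}{y}$ is information-theoretically hidden from $S_2$; the non-collusion assumption of the twin-server architecture then prevents $S_2$ from obtaining $x$ indirectly through $S_1$.

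The main obstacle will be articulating cleanly that leaking $y$ to $S_2$ is compatible with the stated theorem: the theorem asserts non-disclosure of $\frac{x}{y}$, not of $y$, and this distinction must be made explicit so the reader does not conflate the two. A secondary subtlety is the encoding step $\mathtt{y}=\tfrac{\mathtt{den}^{\uparrow}}{\mathtt{den}^{\downarrow}}\cdot 2^{\ell}$: I would briefly check that the resulting $\mathtt{y}$ lies in $\mathbb{Z}_N$ so that the homomorphism $\lb x\rb^{\mathtt{y}}$ remains well-defined and does not accidentally wrap, which would otherwise muddy the security claim rather than the correctness claim.
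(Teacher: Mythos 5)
Your proof is sound, but it is organized quite differently from the paper's. The paper's argument is a single-step observation: the last step of \texttt{SecDiv} is the scalar multiplication $\lb x\rb^{c}$ with $c=\frac{1}{y}\cdot 2^{\ell}$, and since THPC is semantically secure the ciphertext $\lb m\rb^{c}$ does not leak $cm$; hence $\frac{x}{y}$ is not disclosed. The paper never enumerates the parties' views, never discusses what $S_2$ sees, and never acknowledges that $y$ is revealed in cleartext to $S_2$ (and, via the returned scalar $\mathtt{y}$, effectively to $S_1$ as well). Your simulation-style decomposition supplies exactly the pieces the paper omits: you observe that $S_2$'s transcript contains no function of $x$ whatsoever, so $\frac{x}{y}$ is information-theoretically independent of $S_2$'s view, and that $S_1$'s view consists only of ciphertexts it cannot decrypt with a single key share plus a quantity depending on $y$ alone. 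You also make explicit the distinction the paper leaves implicit --- that the theorem protects the quotient $\frac{x}{y}$ but deliberately sacrifices $y$ --- which is the most delicate point of the whole construction and arguably the one a careful reader most wants addressed. What the paper's approach buys is brevity; what yours buys is an actual accounting of both servers under the stated non-collusion and semi-honest assumptions, and it would survive scrutiny that the paper's one-line version would not. Your side remark about whether $\mathtt{y}=\frac{\mathtt{den}^{\uparrow}}{\mathtt{den}^{\downarrow}}\cdot 2^{\ell}$ lands in $\mathbb{Z}_N$ is a correctness issue rather than a privacy one (the paper waves at it with ``when $\ell$ is large enough''), so flagging it as outside the security claim is the right call.
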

\begin{proof}
	Given $\lb x\rb$ and $y$ ($y\neq 0$), \texttt{SecDiv} computes $\lb x\rb^{\frac{1}{y}\cdot2^\ell}$ to produce $\lb\frac{x}{y}\rb$. When $\ell$ is larger enough, $\frac{1}{y}\cdot2^\ell$ must be an integer. Without loss of the generality, let $c=\frac{1}{y}\cdot2^\ell$, we have $c\in\mathbb{Z}_N^*$. Thus, \texttt{SecDiv} essentially is to perform one scalar multiplication operation. As THPC is semantically secure, $\lb m\rb^c$ does not leak $cm$. Therefore, \texttt{SecDiv} does not disclose $\frac{x}{y}$.
\end{proof}

\begin{lemma}
	\texttt{SecPro} can produce each individual's encrypted fitness value and encrypted probability without leaking the individual's city list and route cost.
\end{lemma}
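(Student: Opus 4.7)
The plan is to argue that the joint view of $S_1$ and $S_2$ during an execution of \texttt{SecPro} reveals nothing about any single $v_i$ (the route cost of individual $i$) or about any city of the underlying TSP, by reducing to (a) the semantic security of THPC and (b) the one-wayness of the mapping $\phi(\cdot)$ used to encode the city list. I would split the analysis into one part per server, since the non-colluding twin-server assumption lets us treat their views independently.

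First I would handle $S_1$. During \texttt{SecPro}, $S_1$ sees the inputs $\{\lb v_1\rb,\dots,\lb v_n\rb\}$ (which it held before the protocol), the derived ciphertext $\lb sum\rb$ it formed by additive homomorphism, the partial decryption $[sum]_1$ it produced itself, and the integer $\mathtt{D}$ returned by $S_2$. Because $S_1$ holds only the partial key $sk_1$, the semantic security of THPC implies that no information about any individual $v_i$ can be extracted from the ciphertexts in its view. Moreover, the chromosomes that $S_1$ manipulates are only indices into $\lb\mathbf{M}\rb$, which were produced through the one-way map $\phi(\cdot)$; by the one-wayness of $\phi$, no original city is revealed to $S_1$. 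The only non-trivial plaintext quantity that $S_1$ can reconstruct from $\mathtt{D}$ is the aggregate $sum=\sum_j v_j$, and I would argue explicitly that this aggregate does not fix any individual $v_i$ whenever $n\geq 2$, since there are infinitely many positive decompositions of a fixed $sum$.

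Next I would handle $S_2$. Throughout the protocol $S_2$ receives only $\lb sum\rb$ and $[sum]_1$, combines them with its own $[sum]_2$ to recover $sum$, and never sees any $\lb v_i\rb$, any chromosome, or any index into $\lb\mathbf{M}\rb$. Hence $S_2$'s view is contained in $\{sum\}$ together with its own randomness, so the same aggregate-only argument rules out recovery of any $v_i$, and since $S_2$ never receives any chromosome or index, it learns nothing about the city list. Finally, for the outputs $\lb p_i\rb=\lb v_i\rb^{\mathtt{D}}$, the semantic security of THPC (exactly as used in Theorem~\ref{tho:sdiv}) guarantees that these encrypted probabilities disclose no information about the underlying $v_i$ to any party without the full key $sk$.

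The main obstacle I anticipate is the subtle point that both servers ultimately learn the scalar $sum$, which is strictly more than a pure simulation-based notion of privacy would allow. The proof must therefore make the separation between ``aggregate fitness'' and ``individual route cost / city list'' explicit in its privacy claim, and rely on $n\geq 2$ together with the information-theoretic under-determination of each $v_i$ from $sum$, rather than on a clean indistinguishability reduction. Once that separation is stated, the remaining obligations reduce to routine invocations of the semantic security of THPC for all ciphertexts in the two views and of the one-wayness of $\phi(\cdot)$ for the city encoding.
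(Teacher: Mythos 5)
Your proposal is correct in substance but takes a genuinely different route from the paper. The paper's own proof is a two-line modular reduction: it observes that the operations inside \texttt{SecPro} are instances of \texttt{SecDiv} (the encrypted fitness values and the encrypted probabilities $\lb p_i\rb$ are each obtained as $\texttt{SecDiv}(\cdot,\cdot)$ calls) and then simply invokes Theorem~\ref{tho:sdiv} to conclude that these computations are secure; it never itemizes what each server actually sees. You instead perform a direct per-server view analysis under the non-colluding assumption: semantic security of THPC for every ciphertext in $S_1$'s view, the one-wayness of $\phi(\cdot)$ for the city indices, and an explicit treatment of the one plaintext quantity that does escape --- the aggregate $sum$, which $S_2$ decrypts outright and $S_1$ can reconstruct from $\mathtt{D}$. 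This is the more informative argument: Theorem~\ref{tho:sdiv} only asserts that \texttt{SecDiv} does not disclose the \emph{quotient} $\frac{x}{y}$, not that it hides the denominator $y$ (which \texttt{SecDiv} hands to $S_2$ in the clear), so the paper's reduction silently tolerates the leakage of $sum$ without saying why it is harmless; your under-determination argument for $n\geq 2$ fills exactly that gap. What the paper's approach buys is modularity and brevity; what yours buys is an explicit, honest accounting of the leakage profile, at the cost of conceding (correctly) that the result is a bounded-leakage claim rather than a clean simulation-based one. Two minor points: for integer route costs the decompositions of a fixed $sum$ are finitely many rather than infinite, though still exponentially numerous, so your conclusion stands; and your discussion of the city list via $\phi(\cdot)$ goes slightly beyond what \texttt{SecPro} itself touches, since the protocol only manipulates fitness values, but it does no harm and matches the lemma's wording.
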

\begin{proof}
	According to Algorithm \ref{spro}, we see $\lb v_i\rb\leftarrow\texttt{SecDiv}(\lb sum\rb\cdot\lb D_i\rb^{-1}, sum)$, where $sum$ is the sum of all individuals' route costs, and $D_i$ is the individual $i$ route cost. As Theorem \ref{tho:sdiv} holds, \texttt{SecPro} can securely compute encrypted fitness values. Also, we have $\lb p_i\rb\leftarrow\texttt{SecDiv}(\lb v_i\rb, \lb \sum_{i=1}^{n} v_i \rb)$. Thus, we say that \texttt{SecPro} can securely compute encrypted probabilities when Theorem \ref{tho:sdiv} holds.
\end{proof}

\begin{theorem}
	\label{tho:scmp}
	Given $\lb x\rb$ and $\lb y\rb$, $\texttt{SecCmp}$ does not disclose $x$ and $y$.
\end{theorem}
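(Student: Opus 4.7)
The plan is to establish security of \texttt{SecCmp} in the standard semi-honest (curious-but-honest) two-party model via a view-based argument: for each of $S_1$ and $S_2$ I would exhibit a simulator that reproduces the party's view given only that party's input and its legitimate output, so neither learns anything about $x$ or $y$ beyond what the comparison bit already implies. Because $S_1$ and $S_2$ are non-colluding and THPC is semantically secure, the reduction goes through provided the single plaintext that ever leaves the encrypted domain is properly masked.

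First I would dispose of $S_1$. The view of $S_1$ consists of $\lb x\rb, \lb y\rb$, its locally sampled randomness $\pi, r_1, r_2$, its own partial decryption share $[\Delta]_1$, and the single bit $u$ received from $S_2$. Every ciphertext is indistinguishable from an encryption of $0$ by semantic security of THPC, and the partial key $sk_1$ alone cannot decrypt anything. The only external plaintext $S_1$ receives is $u$, and since the legitimate output is $b=\pi\oplus u$, a simulator handed $b$ can sample $\pi$ uniformly and set $u=b\oplus\pi$, producing a view statistically identical to the real one. Hence $S_1$ learns nothing about $x$ or $y$ beyond the bit it is entitled to compute.

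The main obstacle is $S_2$, since a genuine plaintext $\Delta$ is recovered via \texttt{TDec}, and I must argue that $\Delta$ carries no information about $x, y$ beyond the bit $u$ that $S_2$ legitimately outputs. Under the $\pi=0$ branch
\begin{equation*}
\Delta \equiv r_1(x-y)+r_1+r_2 \pmod{N},
\end{equation*}
with $r_1$ uniform on $\{0,1\}^\sigma\setminus\{0\}$ and $r_1+r_2>N/2$, while under the $\pi=1$ branch $\Delta\equiv(y-x)+r_2\pmod{N}$ with $r_2\leq N/2$. I would argue in two steps: first, because $\pi$ is a fresh uniform bit known only to $S_1$, from $S_2$'s vantage point $\Delta$ is drawn from a mixture of the two branches, so $S_2$ cannot even attribute $\Delta$ to a particular branch; second, within each branch the multiplicative mask $r_1$ (whose bit-length exceeds that of $|x-y|$) and the additive mask $r_2$ statistically blind the underlying values, leaving only the sign of the difference encoded in whether $\Delta$ lies in $[0,N/2]$ or $(N/2,N]$. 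A simulator that samples $\Delta$ uniformly from the half-interval consistent with a uniformly random $u$ therefore produces a view whose statistical distance from the real one is negligible in $\sigma$.

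Combining the two simulations shows that semi-honest $S_1$ and semi-honest $S_2$ each see views that can be generated from their own inputs, their private randomness, and the comparison bit alone; hence \texttt{SecCmp} does not disclose $x$ or $y$. The subtlety I would be careful about is bounding the statistical distance introduced by modular reduction: one must verify that for honest inputs no wrap-around modulo $N$ occurs, i.e.\ that $r_1(x-y)+r_1+r_2$ and $(y-x)+r_2$ land in well-separated halves of $\mathbb{Z}_N$ with overwhelming probability, and that the choice of $\sigma$ renders the residual bias in $\Delta$ cryptographically negligible.
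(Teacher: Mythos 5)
Your decomposition is the same as the paper's --- analyze the views of $S_1$ and $S_2$ separately, invoke semantic security of THPC for everything that stays encrypted, and argue that the one plaintext $S_2$ recovers, $\Delta$, is blinded by $r_1,r_2$ --- but you cast it in the standard simulation paradigm, which is a genuine strengthening. In particular, you correctly include the received bit $u$ in $S_1$'s view and simulate it as $u=b\oplus\pi$; the paper's proof asserts that $S_1$ ``only learns encrypted data,'' which is not literally true, so your treatment of $S_1$ is the more complete one.

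Two caveats on the $S_2$ side, one of which is a real gap. First, you read the $\pi=1$ branch literally as $\Delta=(y-x)+r_2$; the paper's own proof analyzes $r_1\cdot(y-x)+r_2$, i.e., the exponent $r_1$ is evidently missing from the protocol description by typo. Under the literal reading the branch is not even correct (the constraints force $r_2\in(N/2-2^\sigma,\,N/2]$, so $(y-x)+r_2$ need not exceed $N/2$ when $y>x$) and it would leak $y-x$ up to an additive error of $2^\sigma$; you should have flagged this rather than tried to prove it secure. Second, your proposed simulator --- sample $\Delta$ uniformly from the half-interval selected by $u$ --- is not statistically close to the real view: under $\pi=0$ the real $\Delta=r_1(x-y+1)+r_2$ lies within distance roughly $2^\sigma\,(|x-y|+1)$ of $N/2$, a negligible fraction of $(N/2,N]$, and the bit-length of the multiplicatively masked term reveals the approximate magnitude of $|x-y|$. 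So ``only the sign leaks'' is an overclaim. The paper's proof avoids this pitfall only by proving less: it argues informally that $S_2$ cannot recover $x$, $y$, or their difference exactly from the masked value, and never claims simulatability. To make your stronger, simulation-based statement go through, you would need $r_1$ drawn from a range superpolynomially larger than the plaintext domain together with an additive mask that is uniform rather than confined to a $2^\sigma$-wide window, or else weaken the theorem to admit leakage of the rough magnitude of $|x-y|$.
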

\begin{proof}
	In the view of $S_1$, he only learns encrypted data, so \texttt{SecCmp} does not disclose $x$ and $y$ to $S_1$ as THPC is semantically secure. In the view of $S_2$, he can learns $r_1\cdot(x-y+1)+r_2$ ($\pi=0)$ or $r_1\cdot(y-x)+r_2$ ($\pi=1$). However, as $r_1$ and $r_2$ are unkown for $S_2$, given either $r_1\cdot(x-y+1)+r_2$ or $r_1(y-x)+r_2$, $S_2$ fails to get $x$, $y$, $x-y+1$, and $y-x$. Thus, \texttt{SecCmp} does not leak $x$ and $y$ to $S_2$. In short, \texttt{SecCmp} does not disclose $x$ and $y$. Furthermore, even though $S_2$ knows $y$, he cannot get $x$ as fails to know $r_1$ and $r_2$.
\end{proof}

\begin{lemma}
	\texttt{SecFPS} can select dominant individuals over encrypted data without leaking the individual's probability.
\end{lemma}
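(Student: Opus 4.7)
My plan is to show the lemma by analyzing the view of each server separately and reducing every leakage question either to the semantic security of THPC or to Theorem \ref{tho:scmp}. Concretely, I would partition the transcript of Algorithm \ref{sfps} into three kinds of messages: (i) ciphertexts produced or manipulated by $S_1$ under the public key $pk$ (the encrypted probabilities $\lb p_i\rb$, the encrypted prefix sums $\lb p_i\rb\leftarrow\lb p_{i-1}\rb\cdot\lb p_i\rb$, and the encrypted random numbers $\lb r_i\rb$ arriving from $S_2$); (ii) the internal messages exchanged inside each invocation of \texttt{SecCmp}; and (iii) the final index $i$ appended to $Pop$ by $S_1$.

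For messages of type (i), I would invoke the semantic security of THPC to conclude that $S_1$'s view consists of ciphertexts that are computationally indistinguishable from encryptions of zero, and therefore reveal nothing about the plaintext probabilities $p_i$ or the masks $r_j$; in particular the homomorphic scalar products and the prefix multiplications performed by $S_1$ are syntactic manipulations of ciphertexts and introduce no new leakage. For messages of type (ii), the critical observation is that each comparison inside the binary search \textsc{FindIndividual} is carried out by calling \texttt{SecCmp}$(\lb p_i\rb,\lb r\rb)$ or \texttt{SecCmp}$(\lb p_{i-1}\rb,\lb r\rb)$; by Theorem \ref{tho:scmp} no such call discloses $p_i$, $p_{i-1}$, or $r$ to either server, so the intermediate partial decryptions, blinded values, and bit $u$ seen by $S_2$ can be simulated from random coins alone. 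A hybrid argument over the at most $n\lceil\log_2 n\rceil$ comparison calls then lifts the per-call indistinguishability of \texttt{SecCmp} into indistinguishability of the whole \texttt{SecFPS} transcript.

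The main obstacle, and the part I expect to require the most care, is the type-(iii) leakage: $S_1$ does learn the final selected index $i$ for each $r_j$, and a priori the tuple of chosen indices is correlated with the underlying probabilities $p_1,\ldots,p_n$. My plan for this step is to argue that $S_1$'s view reveals only the selection outcomes themselves, not the probabilities: because the masks $r_j$ are drawn uniformly and independently by $S_2$ and remain encrypted to $S_1$, the index returned by \textsc{FindIndividual} can be simulated by $S_1$ given only the same selection oracle a legitimate fitness-proportionate selection procedure would expose; in particular, no additional information about any single $p_i$ beyond its induced selection outcome is transmitted. Formally I would build a simulator $\mathsf{Sim}_{S_1}$ that, given access to an ideal sampler returning indices distributed according to $(p_1,\ldots,p_n)$, produces ciphertexts by encrypting arbitrary placeholders and fakes every \texttt{SecCmp} transcript using its simulator, and then show the real and simulated views of $S_1$ are computationally indistinguishable.

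Finally I would combine the two simulators $\mathsf{Sim}_{S_1}$ and $\mathsf{Sim}_{S_2}$ via the non-colluding twin-server assumption stated in Section IV-B to conclude that the joint real-world execution of \texttt{SecFPS} leaks no more than the $n$ selected indices, i.e., the plaintext probabilities $p_i$ are never disclosed. The dominance guarantee of the selection, meanwhile, follows from correctness of the prefix computation under the additive homomorphism of THPC and from the correctness of \texttt{SecCmp}, which together ensure that the index returned by \textsc{FindIndividual}$(\lb r\rb,1,n)$ is exactly the unique $i$ with $p_{i-1}<r\le p_i$, so the induced distribution over $Pop$ coincides with standard fitness proportionate selection.
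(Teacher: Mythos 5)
Your proof is correct and strictly more careful than the paper's own, which disposes of the lemma in three sentences: $S_1$ sees only the comparison bits from \texttt{SecCmp} and, not knowing the masks $r_j$ generated by $S_2$, cannot recover any $p_i$; $S_2$ sees only the blinded differences, which by Theorem \ref{tho:scmp} reveal nothing. Your decomposition into ciphertext messages, \texttt{SecCmp} sub-transcripts, and the final selected indices reproduces that core argument (semantic security of THPC for type (i), Theorem \ref{tho:scmp} plus a hybrid over the $O(n\log n)$ calls for type (ii)), but your treatment of type (iii) is a genuine addition: the paper never acknowledges that the tuple of selected indices is itself correlated with $(p_1,\ldots,p_n)$ and therefore constitutes leakage that must be charged to the ideal functionality rather than waved away. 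Your simulator $\mathsf{Sim}_{S_1}$, given an ideal fitness-proportionate sampler, is exactly the right device to make "does not leak the individual's probability" precise as "leaks nothing beyond the selection outcomes," and your closing correctness argument (that \textsc{FindIndividual} returns the unique $i$ with $p_{i-1}<r\le p_i$) covers the "can select dominant individuals" half of the lemma, which the paper's proof omits entirely. The one caveat is that your simulation of the \texttt{SecCmp} sub-transcripts assumes a simulatability property slightly stronger than what Theorem \ref{tho:scmp} literally asserts (the paper only argues $S_2$ cannot invert the blinded value, not that the blinded value is indistinguishable from random); a fully formal version would need $r_1,r_2$ to be chosen so that $r_1\cdot(x-y)+r_2$ statistically hides $x-y$, which the stated constraints on $r_1,r_2$ do not quite guarantee, but this is a gap inherited from the paper rather than introduced by you.
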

\begin{proof}
	From Algorithm \ref{sfps}, we see although $S_1$ can obtain $r_i\leq p_i$ or $r_i>p_i$ ($i\in[1,n]$), he fails to know $r_1$. Thus, we say that $S_1$ fails to learn $p_i$. Also, $S_2$ can get $r_1\cdot(r_i-p_i+1)+r_2$ or $r_1\cdot(p_i-r_i)+r_2$, but he fails to obtain $p_i$ as Theorem \ref{tho:scmp} holds. In short, \texttt{SecFPS} does not disclose the individual's probability.
\end{proof}

\subsection{Experimental Evaluation}
In this section, we evaluate the effectiveness of PEGA by comparing it with two conventional GA variants \cite{larranaga1999genetic} and give the performance of PEGA in terms of the computational complexity and communication costs. Specifically, the first GA variant adopts the fitness proportionate selection as the selection operator (named GA1), and the second one adopts the $k$-tournament as the selection operator (named GA2). Note that GA1 and GA2 utilize the ERX operator as the crossover operator due to its remarkable performance for TSP \cite{larranaga1999genetic}. Through our proposed secure computing protocols, PEGA can support fitness proportionate selection and $k$-tournament selection. Roughly speaking, \texttt{SecFPS} based on \texttt{SecCmp} and \texttt{SecDiv} enables the fitness proportionate selection. Also, \texttt{SecCmp} naturally supports the $k$-tournament selection. 

The experiments are executed on four most widely used TSP datasets\footnote{http://comopt.ifi.uni-heidelberg.de/software/TSPLIB95/tsp/} (i.e., gr48, kroA100, eil101, kroB200), where gr48 and kroA100 are small scale, while eil101 and kroB are medium scale \cite{wei2019empirical}. We implement PEGA and conventional GA variants in Java. The experiment is performed on a personal computer running windows 10-64bit with an Intel Core i7-4790 CPU \@ 3.6 GHz processor and 16 GB memory, which acts as the user. Also, the server running windows 10 64 bit with an Intel Core i7-10700 CPU \@ 2.9 GHz processor, and 32 GB memory, which simulates two cloud servers. Since GA is a stochastic approach, 30 independent runs are executed for each algorithm to generate an average. Experimental settings are listed in Table \ref{parm}, where $\|x\|$ denotes the length of $x$ in bits. The crossover rate and the mutation rate use settings in \cite{wei2019empirical,zhong2005comparison}.
\begin{table}
	\centering
	\begin{threeparttable}
		\caption{Experimental Parameter Settings}
		\begin{tabular}{rl}
			\toprule
			Parameters & Values                         \\ \midrule
			The population size $n$ & $n=300$                \\
			The crossover rate & $0.08, 0.1$ \\
			The mutation rate & $0.1, 0.15$                   \\
			$\ell$ & $\ell=106$                      \\
			$\|N\|$, $\|\lambda_1\|$ & $\|N\|=256$, $\|\lambda_1\|=80$                   \\
			$k$-tournament & $k=2$\\
			The max number of generations & 10000                            \\ \bottomrule
		\end{tabular}
		\label{parm}
	\end{threeparttable}
\end{table}

\subsection{Effectiveness Evaluation}
Given four TSPs, i.e., gr48, kroA100, eil101, and kroB200, we firstly compare the performance between GA1 and GA2. Experimental results are shown in Fig. \ref{fig:ga1vsga2}.  The $x-$axis is the number of generations and the $y-$axis is the path length of routing. The red solid line and blue dashed line represent GA1 and GA2, respectively. As depicted in Fig. \ref{fig:ga1vsga2}, we see that GA2 is remarkably superior to GA1 in terms of convergence. Specifically, in contrast to GA1, GA2 always converges to a smaller path length of routing in four given TSPs. In other words, GA2 has a stronger ability in approximating the optimal solution than GA1. Thus, we argue that $k$-tournament selection outperforms fitness proportionate selection for TSPs. One possible explanation is that $k$-tournament selection always selects a dominant individual into next generation, whilst poor individuals are possible to be selected by the fitness proportionate selection.
\begin{figure}[ht]
	\centering
	\subfigure[rossover rate (0.08), mutation rate (0.1)]{
		\includegraphics[width=0.46\linewidth]{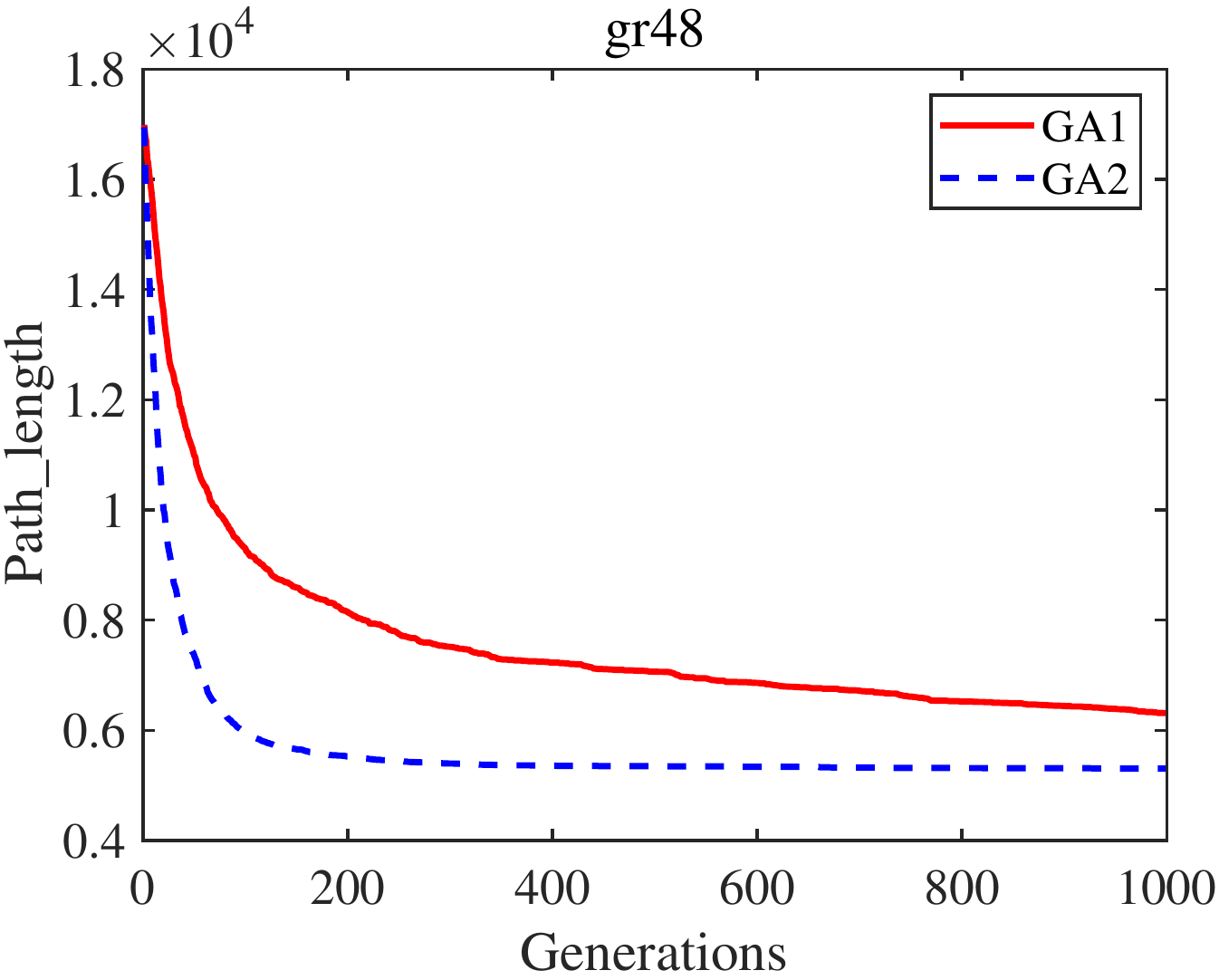}}
	\hspace{0.05in}
	\subfigure[rossover rate (0.1), mutation rate (0.15)]{
		\includegraphics[width=0.46\linewidth]{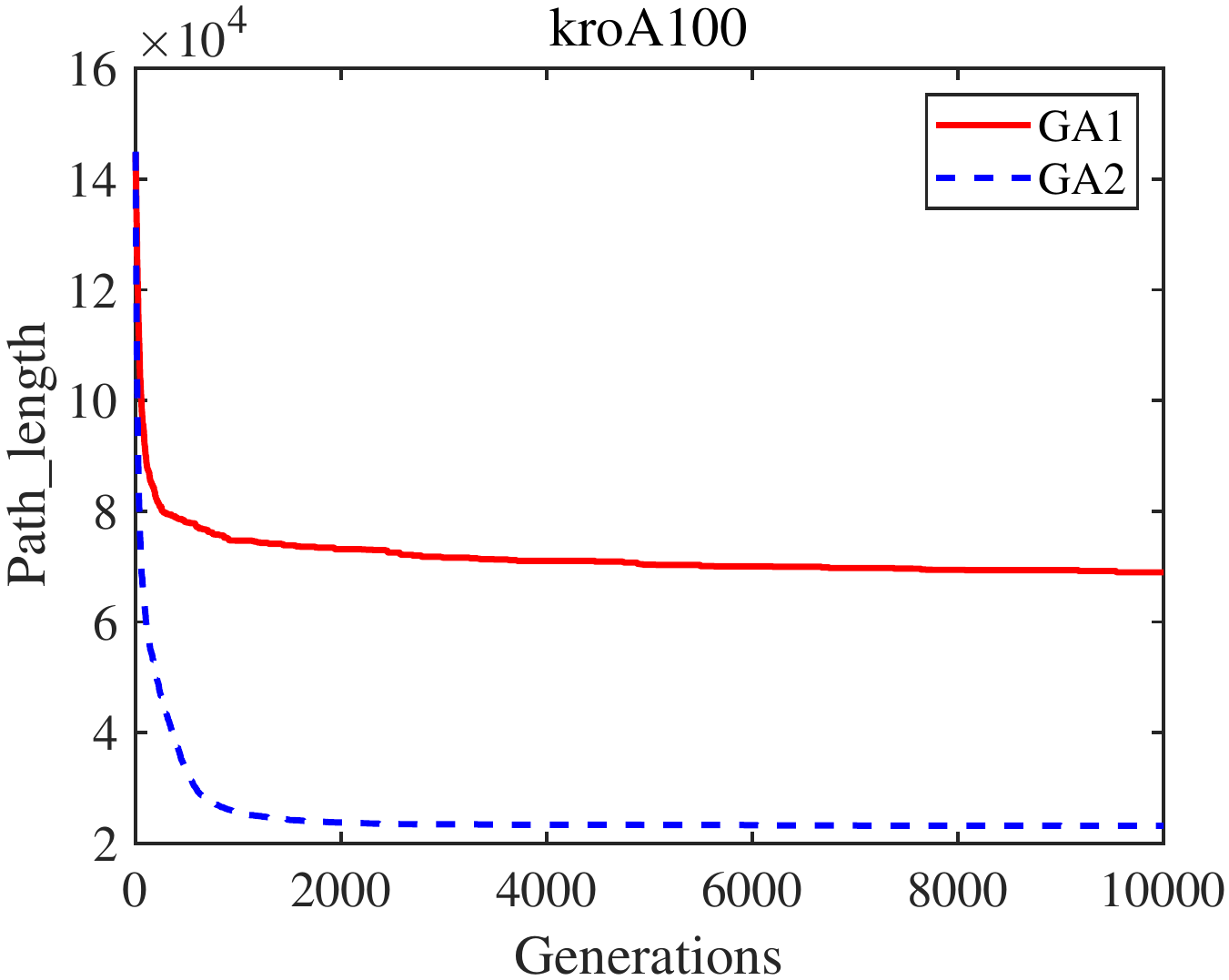}}
	\subfigure[rossover rate (0.1), mutation rate (0.15)]{
		\includegraphics[width=0.46\linewidth]{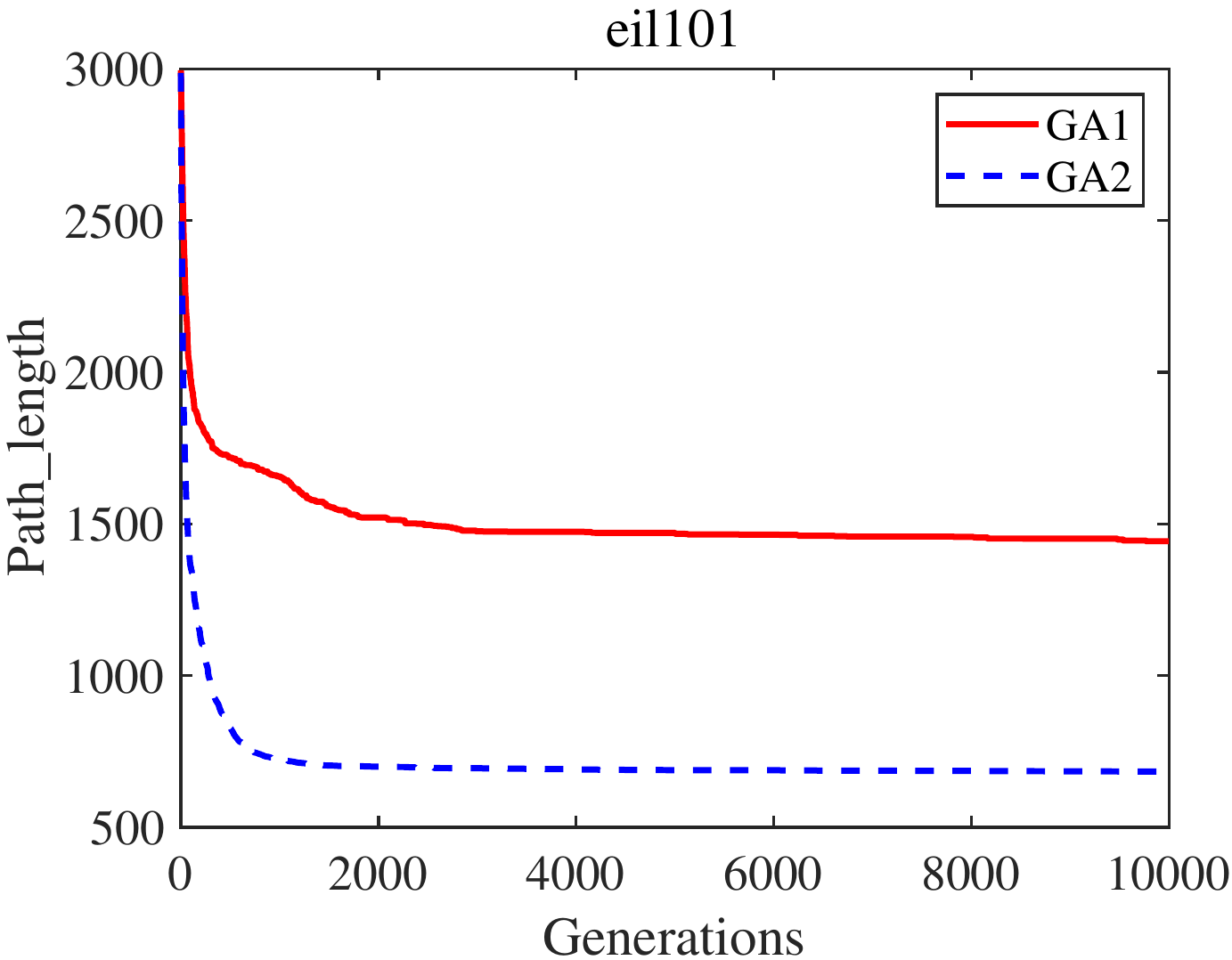}}
	\hspace{0.05in}
	\subfigure[rossover rate (0.1), mutation rate (0.15)]{
		\includegraphics[width=0.46\linewidth]{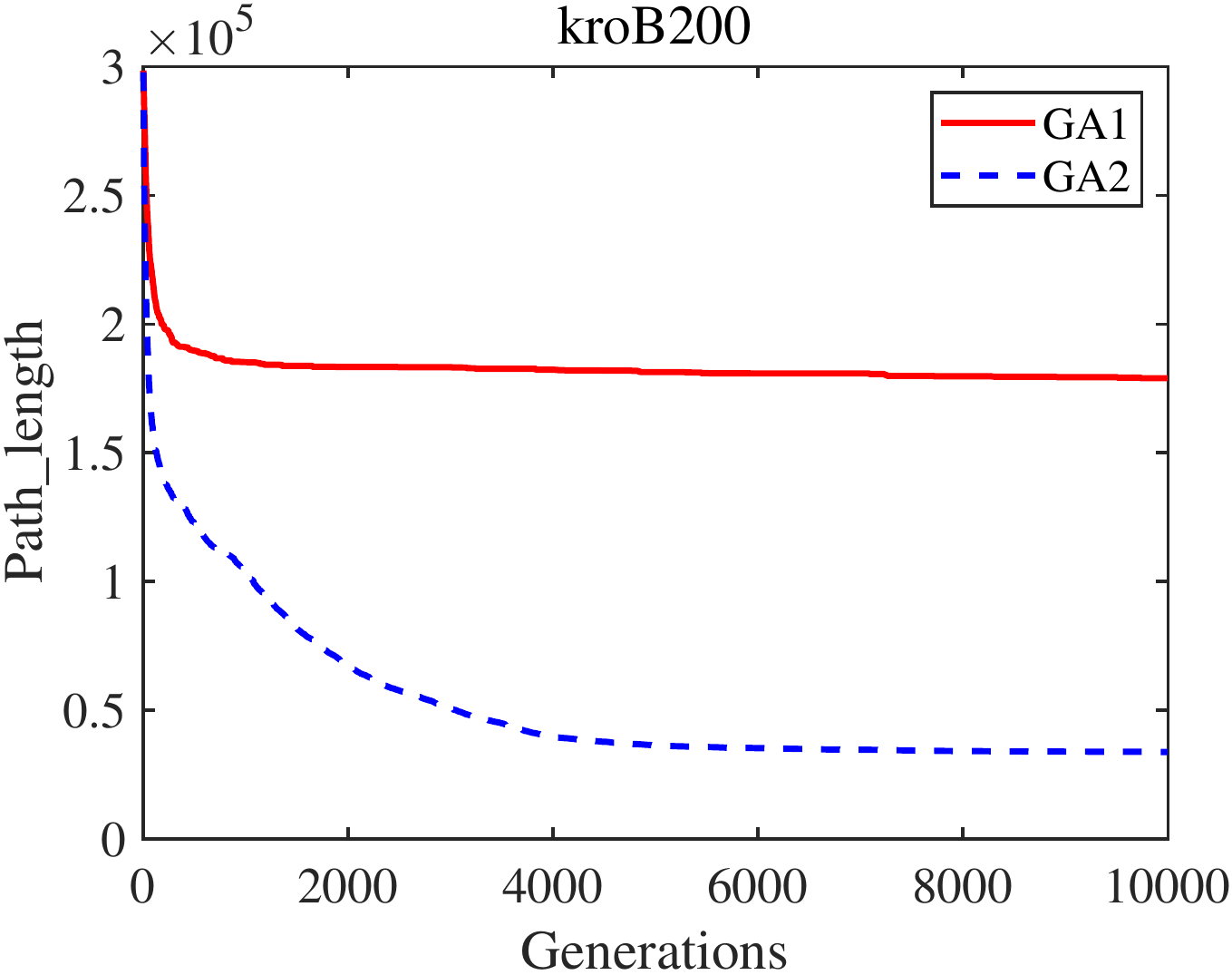}}
	\caption{Comparison of the convergence between GA1 and GA2 (The average result of 30 independent runs).}
	\label{fig:ga1vsga2}
\end{figure}

Although GA2 outperforms GA1 as shown in Fig. \ref{fig:ga1vsga2}, to demonstrate the effectiveness of the proposed PEGA, we construct PEGA1 and PEGA2, where PEGA1 and PEGA2 adopt the same evolutionary operators as GA1 and GA2, respectively. For four TSPs, i.e., gr48, kroA100, eil101, and kroB200, the comparison results between PEGA and GA are presented in Table \ref{table::comp}. To perform statistical tests, Wilcoxon rank-sum test at significance level 0.05 is adopted to examine whether compared results are significantly different. Also, mean and standard deviation are tested. The best results are highlighted in bold based on the p-value of the Wilcoxon rank-sum test. Particularly, to make a fair comparison, PEGA1 and PEGA2 use the same initial population as GA1 and GA2, respectively.

As depicted in Table \ref{table::comp}, in terms of mean, PEGA1 outperforms GA1 on gr48, kroA100, and kroB200. Meanwhile, PEGA2 outperforms GA2 on gr48 and kroA100. In terms of std, PEGA1 has less std on gr48, eil101, and kroB200 that are not exactly the same as those PEGA1 being superior on the mean. Thus, we can learn that less mean does not generate less std. From Table \ref{table::comp}, we see that the p-value in four TSPs is larger than 0.05, so it can conclude that there is no significant difference between PEGA1 and GA1. Similarly, PEGA2 and GA2 do not have significant difference. One possible explanation is that PEGA and GA perform the same evolution operators. Furthermore, our proposed secure computing protocols do not introduce noise into computational results, which guarantees calculation accuracy. The only difference between PEGA and GA is that PEGA performs evolution operators on encrypted data to protect privacy, on the contrary, GA performs evolution operators on cleartext data directly. The statistical results of mean and std between PEGA and GA are different. This is because PEGA and GA use different random numbers during performing evolution operators.
\begin{table*}[ht]
	\centering
	\caption{Comparison Results between PEGA and GA (The average result of 30 independent runs)}
	\begin{threeparttable}
		\begin{tabular}{c|c|c|c|c|c|c}
			\toprule
			Problems          & Scale                  &    Statistical tests     &      PEGA1     &     GA1    &PEGA2 & GA2 \\ \midrule
			\multirow{3}{*}{gr48} & \multirow{6}{*}{small} & mean    &     \textbf{6.2071e+03}      &     6.3138e+03     &  \textbf{5.2949e+03} & 5.3033e+03 \\
			&                        & std     &      \textbf{515.74}     &    530.22      & 119.78 & \textbf{98.32}\\
			&                        & p-value & \multicolumn{2}{c|}{\textbf{0.3183}} & \multicolumn{2}{c}{\textbf{1.0}} \\ \cmidrule(r){1-1} \cmidrule(l){3-7} 
			\multirow{3}{*}{kroA100} &      & mean    &   \textbf{6.8017e+04}        &   6.8961e+04       & \textbf{2.2819e+04} & 2.3175e+04 \\
			&                        & std     &      2.4004e+03     &   \textbf{1.9935e+03}       &\textbf{619.31} & 755.14\\
			&                        & p-value & \multicolumn{2}{c|}{\textbf{0.3615}} & \multicolumn{2}{c}{\textbf{0.1150}} \\\midrule 
			\multirow{3}{*}{eil101} &     \multirow{6}{*}{medium}                    & mean    &    1.4739e+03       &   \textbf{1.4431e+03}       & 686.4667 & \textbf{683.8667} \\
			&                        & std     &     \textbf{45.85}      &     82.73     & 15.68 & \textbf{9.99}\\
			&                        & p-value & \multicolumn{2}{c|}{\textbf{0.5475}} & \multicolumn{2}{c}{\textbf{0.5746}} \\\cmidrule(r){1-1} \cmidrule(l){3-7}
			\multirow{3}{*}{kroB200} &       & mean    &      \textbf{1.7723e+05}     &   1.7895e+05      & 3.3878e+04 &\textbf{3.3775e+04} \\
			&                        & std     &     5.8435e+03      &     \textbf{3.6344e+03 }    & 761.45 & \textbf{615.0}\\
			&                        & p-value & \multicolumn{2}{c|}{\textbf{0.4679}} & \multicolumn{2}{c}{\textbf{0.8419}} \\ \bottomrule
		\end{tabular}\label{table::comp}
	\end{threeparttable}
\end{table*}

To visualize the above conclusion, we plot convergence curves of PEGA1, GA1, PEGA2, and GA2 on four TSPs shown in Fig. \ref{fig:all}. The $x-$axis is the number of generations and the $y-$axis is the path length of routing. The red solid line and blue solid line represent GA1 and GA2, respectively. Cyan dashed line and black dashed line represent PEGA1 and PEGA2, respectively. Fig. \ref{fig:all} visually shows that PEGA1 and GA1 have the same convergence trend, and PEGA2 and GA2 has the same convergence trend. As shown in Table \ref{table::comp} and Fig. \ref{fig:all}, we argue that PEGA is as effective as GA for TSPs in approximating the optimal solution.
\begin{figure}[ht]
	\centering
	\subfigure[rossover rate (0.08), mutation rate (0.1)]{
		\includegraphics[width=0.46\linewidth]{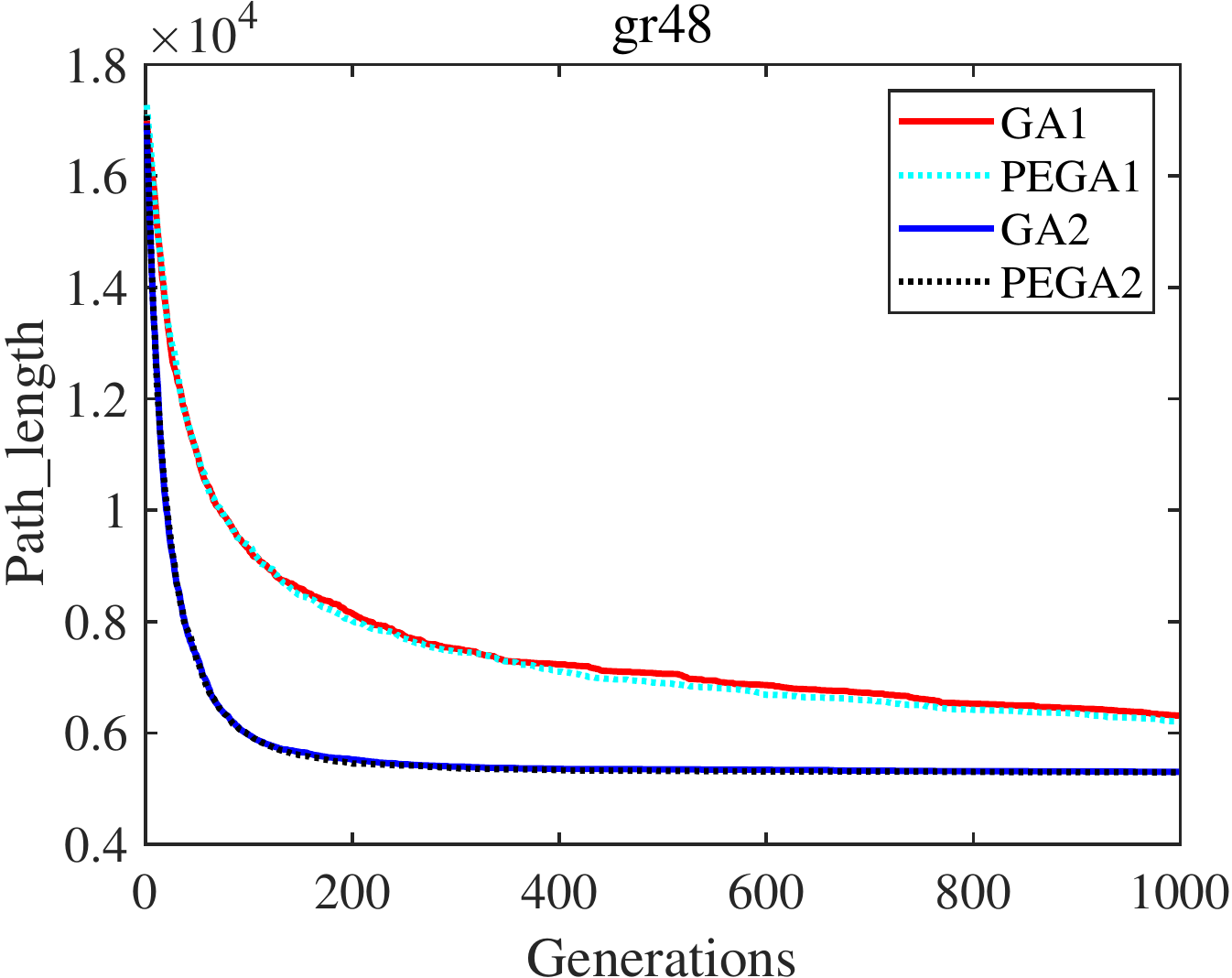}}
	\hspace{0.05in}
	\subfigure[rossover rate (0.1), mutation rate (0.15)]{
		\includegraphics[width=0.46\linewidth]{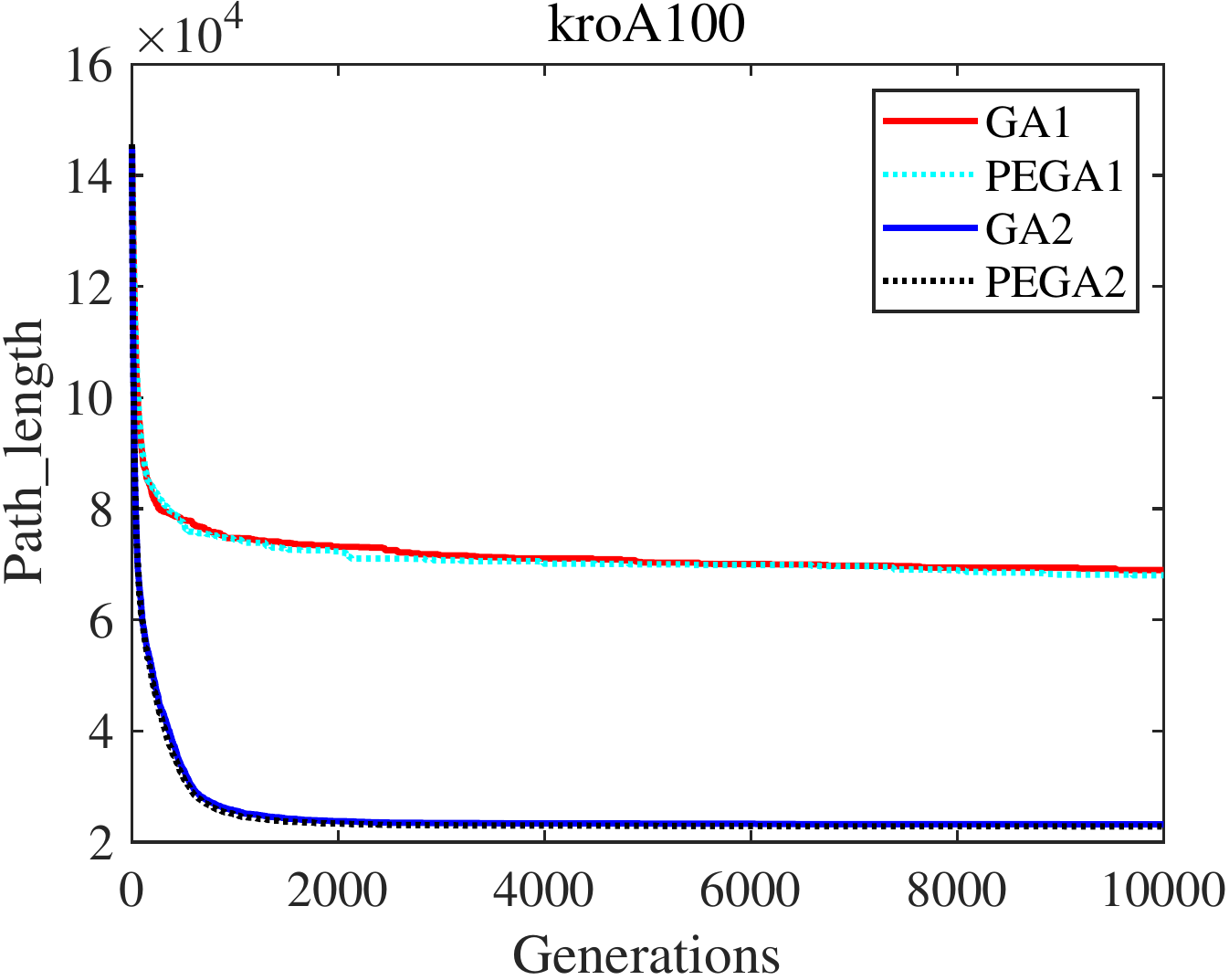}}
	\subfigure[rossover rate (0.1), mutation rate (0.15)]{
		\includegraphics[width=0.46\linewidth]{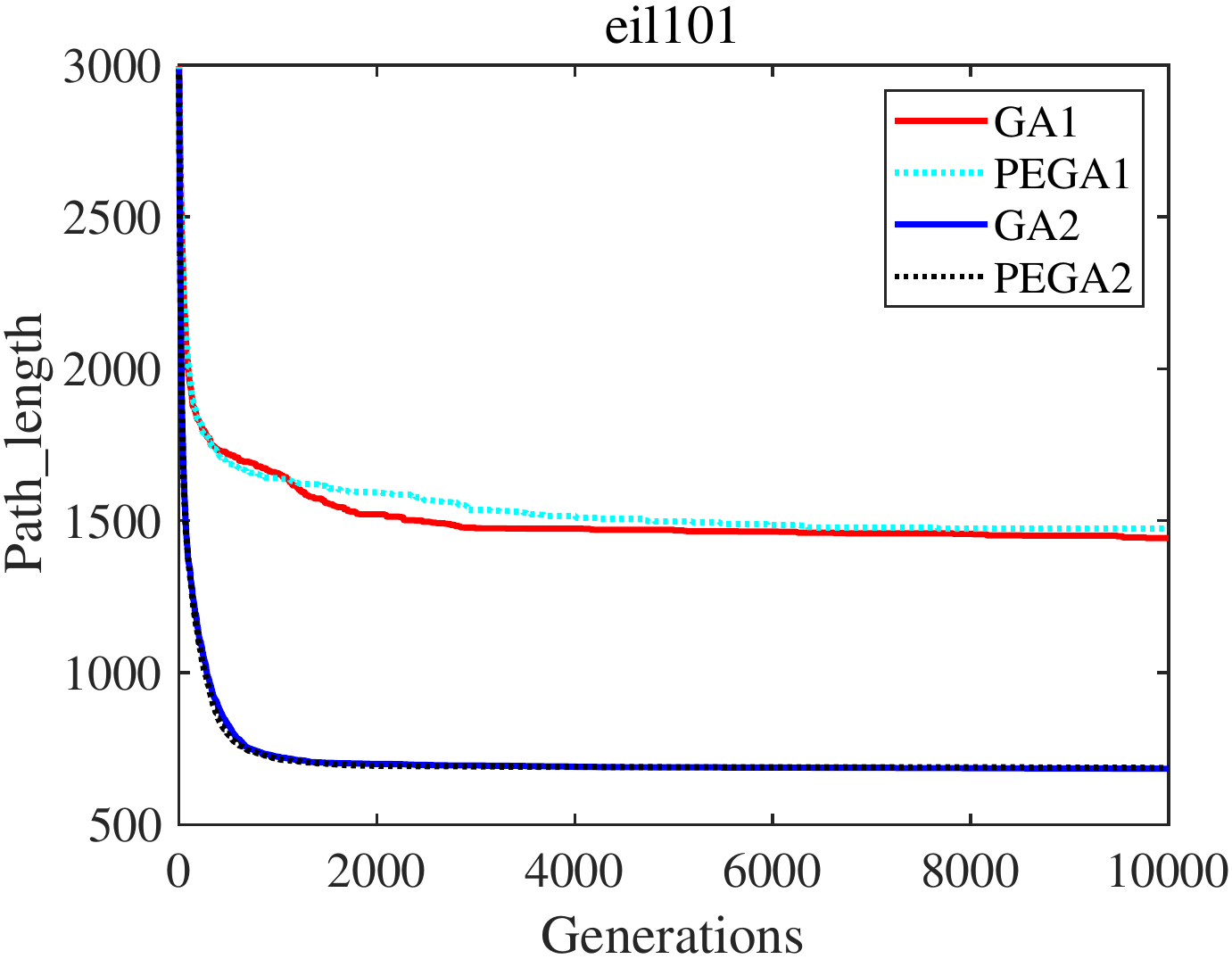}}
	\hspace{0.05in}
	\subfigure[rossover rate (0.1), mutation rate (0.15)]{
		\includegraphics[width=0.46\linewidth]{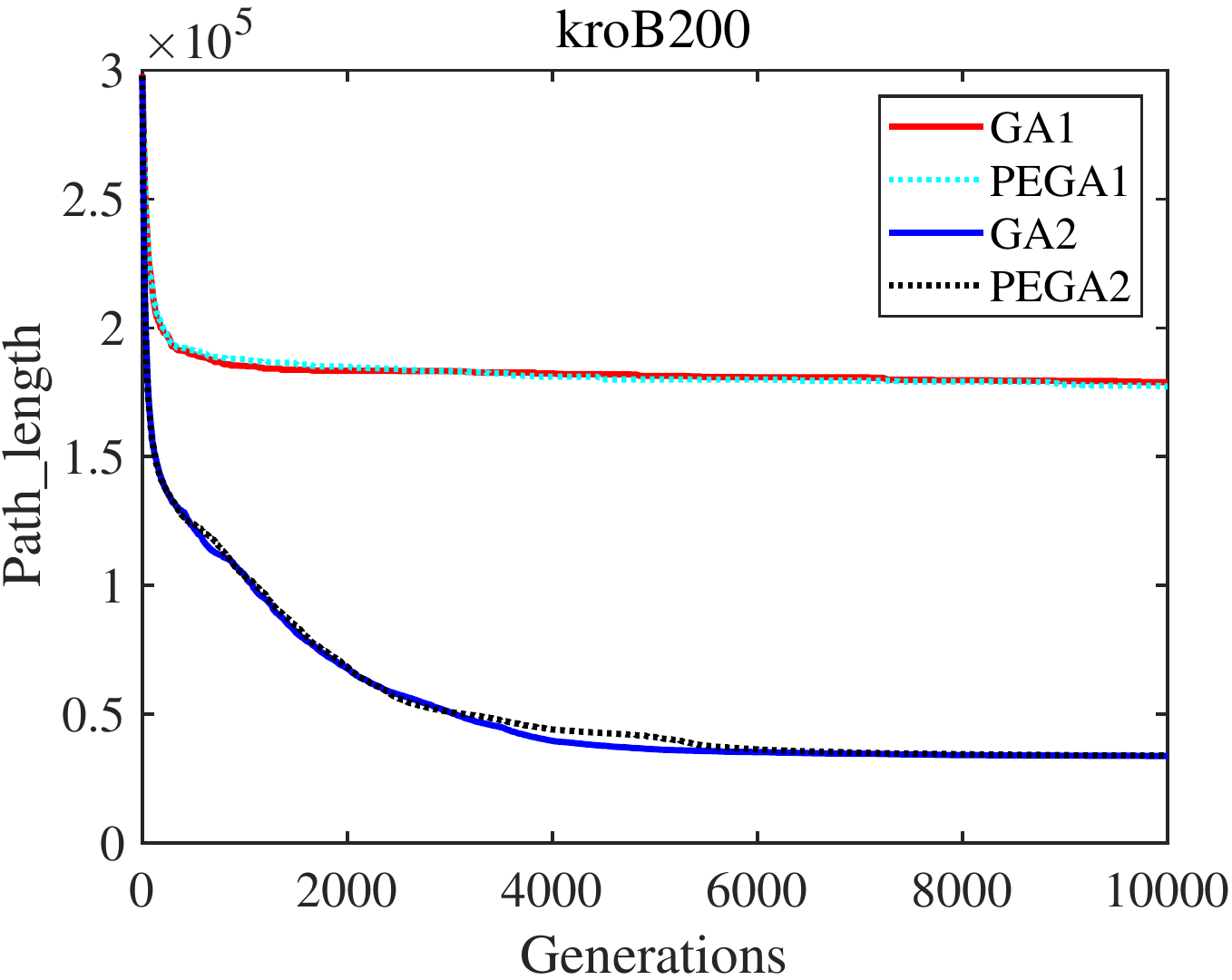}}
	\caption{Comparison of the convergence between PEGA and GA (The average result of 30 independent runs).}
	\label{fig:all}
\end{figure}

To further demonstrate the effectiveness of PEGA, we make PEGA1 and PEGA2 use the same random numbers with GA1 and GA2 to perform evolutionary operations, respectively. The experimental results are given in Fig. \ref{fig:pegavsga}. Magenta circle and cyan circle represent GA1 and GA2, respectively. Blue solid line and black solid line represent PEGA1 and PEGA2, respectively. From Fig. \ref{fig:pegavsga}, we see that PEGA1 and GA1 has the same convergence, and PEGA2 and GA2 has the same convergence, when the same random numbers are adopted. This is because our proposed secure computing protocols support exactly computations on encrypted data. In fact, Fig. \ref{fig:pegavsga} illustrates that PEGA is as effective as GA. In other words, given encrypted TSPs, PEGA can effectively approximate the optimal solution as GA.
\begin{figure}[ht]
	\centering
	\subfigure[rossover rate (0.08), mutation rate (0.1)]{
		\includegraphics[width=0.46\linewidth]{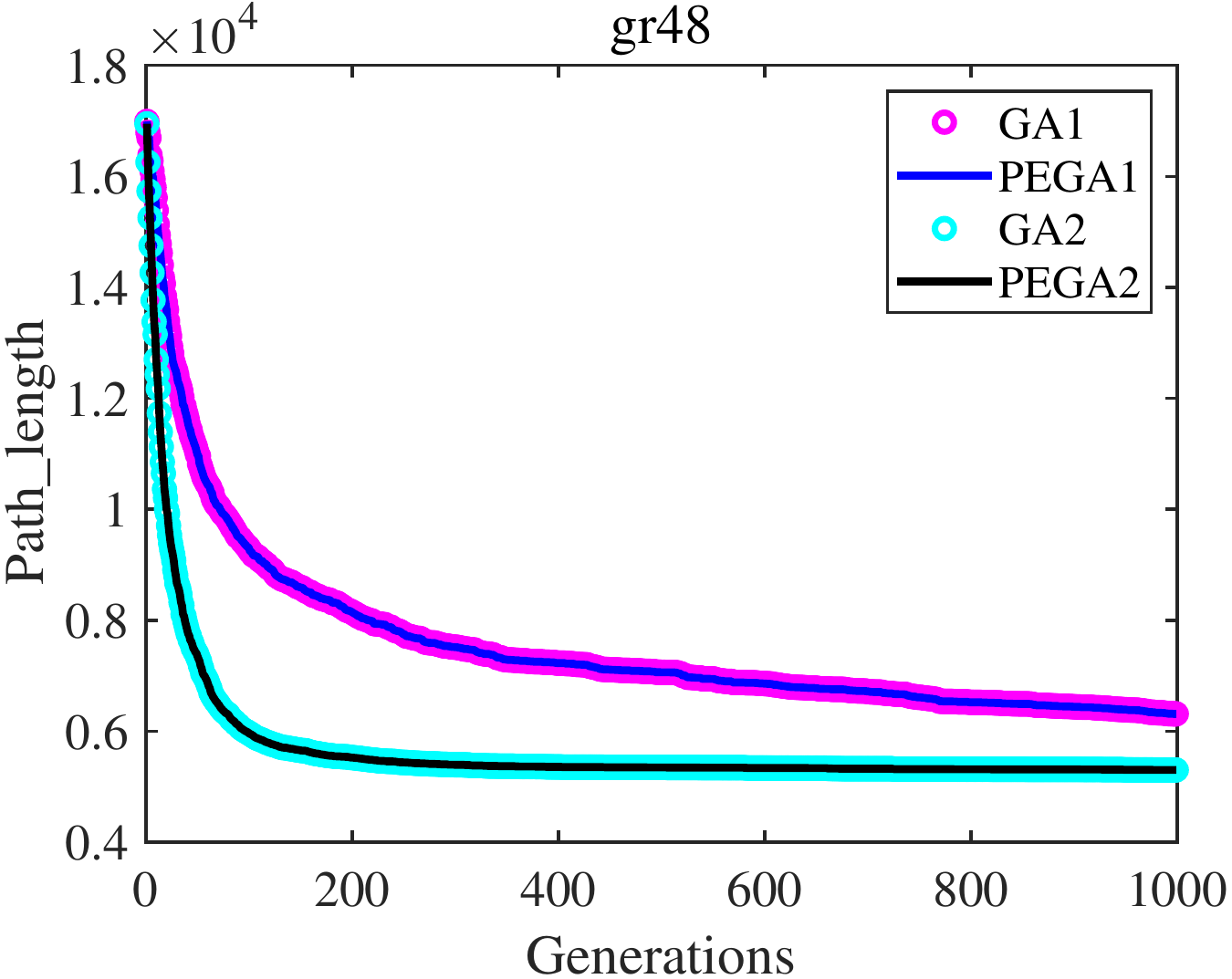}}
	\hspace{0.05in}
	\subfigure[rossover rate (0.1), mutation rate (0.15)]{
		\includegraphics[width=0.46\linewidth]{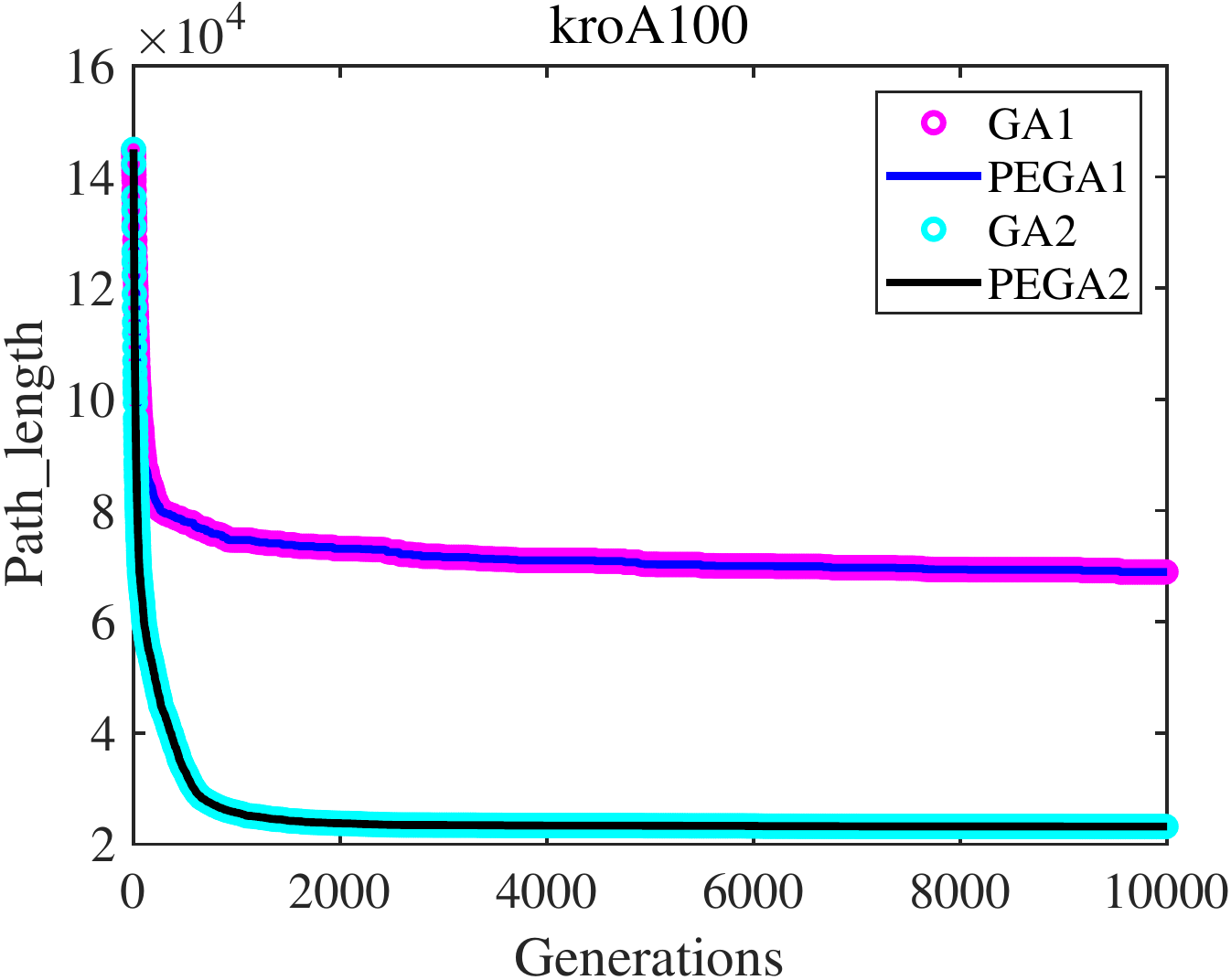}}
	\subfigure[rossover rate (0.1), mutation rate (0.15)]{
		\includegraphics[width=0.46\linewidth]{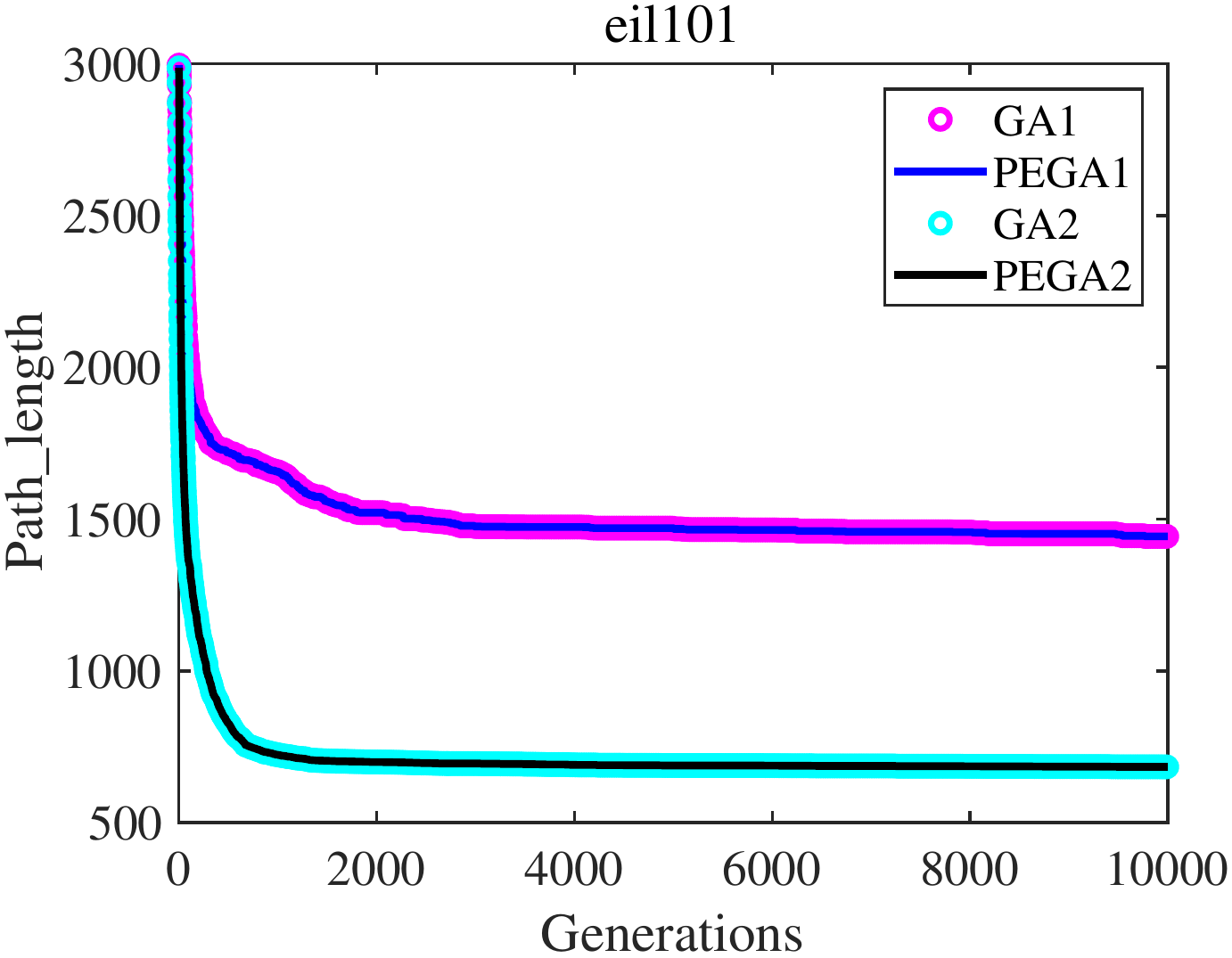}}
	\hspace{0.05in}
	\subfigure[rossover rate (0.1), mutation rate (0.15)]{
		\includegraphics[width=0.46\linewidth]{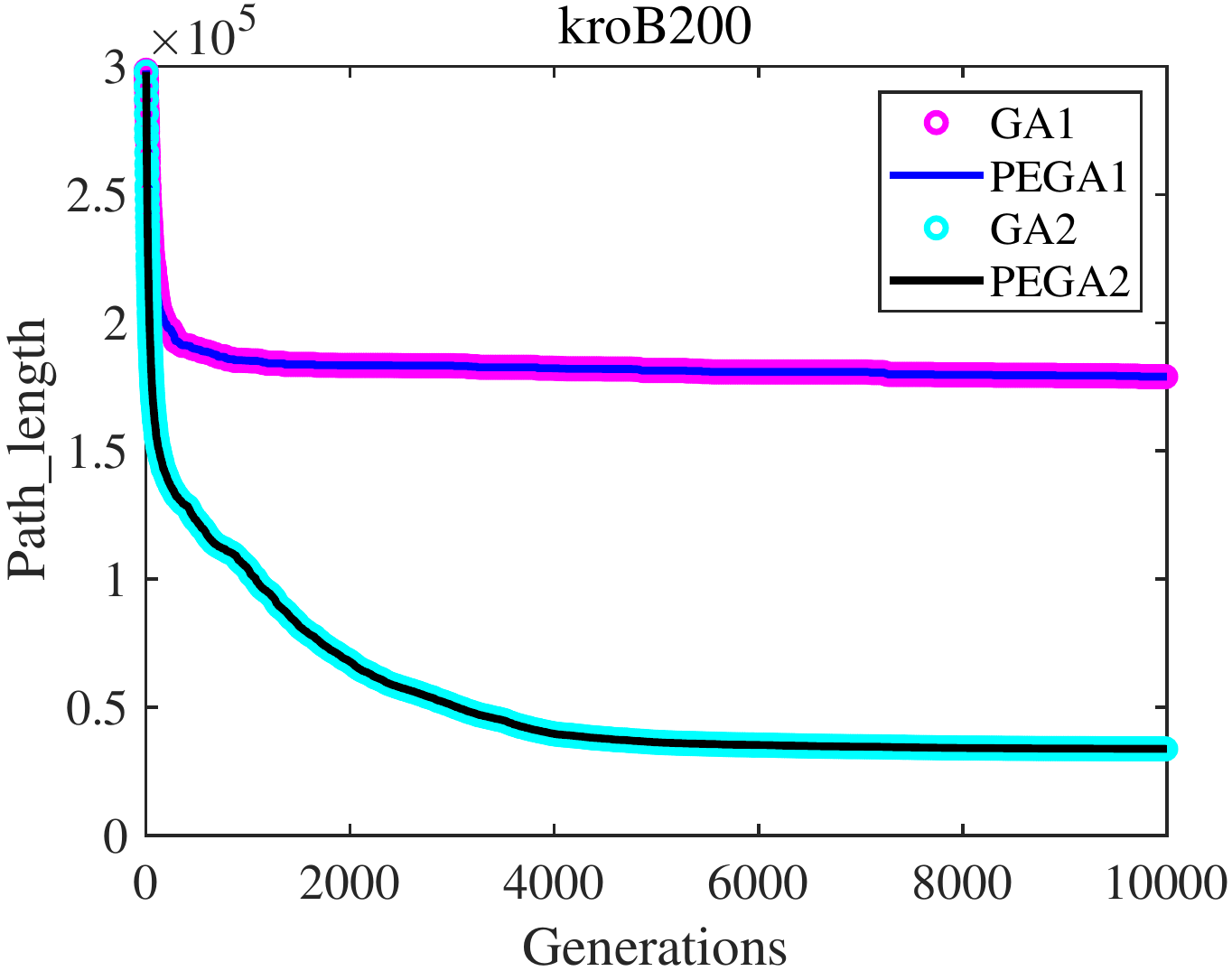}}
	\caption{Comparison of the convergence between PEGA and GA (The average result of 30 independent runs).}
	\label{fig:pegavsga}
\end{figure}

\subsection{Efficiency Evaluation}
In this section, we evaluate the efficiency of PEGA in terms of communication cost and computation cost. Table \ref{tab:cmm} shows the comparison results of communication cost between PEGA and GA. From Table \ref{tab:cmm}, we see that PEGA has a larger communication cost than GA. In PEGA, a user submits an encrypted TSP matrix, i.e., $\lb \mathbf{M}\rb$. On the contrary, the user in GA submits the TSP matrix $\mathbf{M}$ directly. As the ciphertext of THPC is significantly larger than its plaintext, the communication cost of PEGA is larger than that of GA. Also, when a small $N$ is set, it can significantly reduce the communication cost of PEGA. One possible explanation is that the smaller $N$, the smaller the ciphertext size of THPC is. As shown in Table \ref{tab:cmm}, we see that even a large TSP (e.g., kroB200) and a large $N$ are set, the communication cost of PEGA is less than 6 MB.
\begin{table}[!ht]
	\centering
	\caption{Comparison of Communication Cost between PEGA and GA (The average result of 30 independent runs)}
	\begin{threeparttable}
		\begin{tabular*}{0.48\textwidth}{@{}@{\extracolsep{\fill}}ccccc}
			\toprule
			& gr48 & kroA100 & eil101 & kroB200            \\ \midrule
			GA & 7 KB & 33 KB & 25 KB &  132 KB   \\ \hline
			PEGA$^\dagger$ & 173 KB & 0.74 MB & 0.76 MB & 2.98 MB \\ \hline
			PEGA$^\ddagger$ & 373 KB & 1.46 MB & 1.50 MB & 5.90 MB\\\bottomrule
		\end{tabular*}\label{tab:cmm}
		\begin{tablenotes}
			\scriptsize
			\item \textbf{Note.} $^\dagger$: the length of public key $N$ of Paillier cryptosystem in bits in PEGA is 128; $^\ddagger$: the length of public key $N$ of Paillier cryptosystem in bits in PEGA is 256;
		\end{tablenotes}
	\end{threeparttable}
\end{table}

Assume there be $n$ individuals and $m$ cities ($n>m$). A GA consists of \textsc{Gen\_Initial\_Pop}, \textsc{Evaluation}, \textsc{Selection}, \textsc{Crossover}, and \textsc{Mutation}. \textsc{Gen\_Initial\_Pop} initializes $n$ $m-$dimension individuals, so its computational complexity is $\mathcal{O}(mn)$. \textsc{Evaluation} is to compute each individual's route cost, so its computational complexity is also $\mathcal{O}(mn)$. \textsc{Selection} generally selects $n$ new individuals via a proportionate selection operator. The computational complexity of the conventional fitness proportionate selection operator is $\mathcal{O}(n^2)$. In this paper, PEGA adopts the idea of binary search to select new individuals, and its computational complexity is $\mathcal{O}(n\log n)$. Thus, PEGA improves the performance of \textsc{Selection} comparing to conventional GA. In this paper, we adopt ERX to perform \textsc{Crossover}. The computational complexity of ERX is $\mathcal{O}(m)$. The computational complexity of \textsc{Mutation} is $\mathcal{O}(n)$ for a population with $n$ individuals. Thus, we see that the computational complexity of GA is $\mathcal{O}(n^2t)+\mathcal{O}(mnt)$, while that of PEGA is $\mathcal{O}(n\log nt)+\mathcal{O}(mnt)$, where $t$ is the number of generations.

In contrary to GA, PEGA requires to encrypt the TSP matrix $\lb\mathbf{M}\rb$. Fig. \ref{fig:time} shows the runtime of encryption and searching for the optimal route of PEGA, where the runtime of searching is running one generation. From Fig. \ref{fig:time}, we can learn that the more cities, the more runtime of encryption for PEGA is. PEGA requires encrypting reachable routes between two cities. The more cities, the more reachable routes between two cities are. Also, PEGA takes around 3 s to produce a potential solution when $k$-tournament selection is adopted. Furthermore, for four TSPs, PEGA takes almost the same runtime to produce a possible solution. For PEGA, \textsc{Selection} performs most computations on encrypted data, and computations on encrypted data are time-consuming than computations on plaintext data. As the computational complexity of \textsc{Selection} of PEGA is $\mathcal{O}(n\log n)$, when $n$ is the same, PEGA takes almost the same runtime to produce a possible solution for four different TSPs. From Fig. \ref{fig:time}, we also see that the fitness proportionate selection consumes more runtime to generate a possible solution. One possible explanation is that the fitness proportionate selection operator requires more operations on encrypted data than $k$-tournament selection operator.
\begin{figure}
	\centering
	\includegraphics[width=0.56\linewidth]{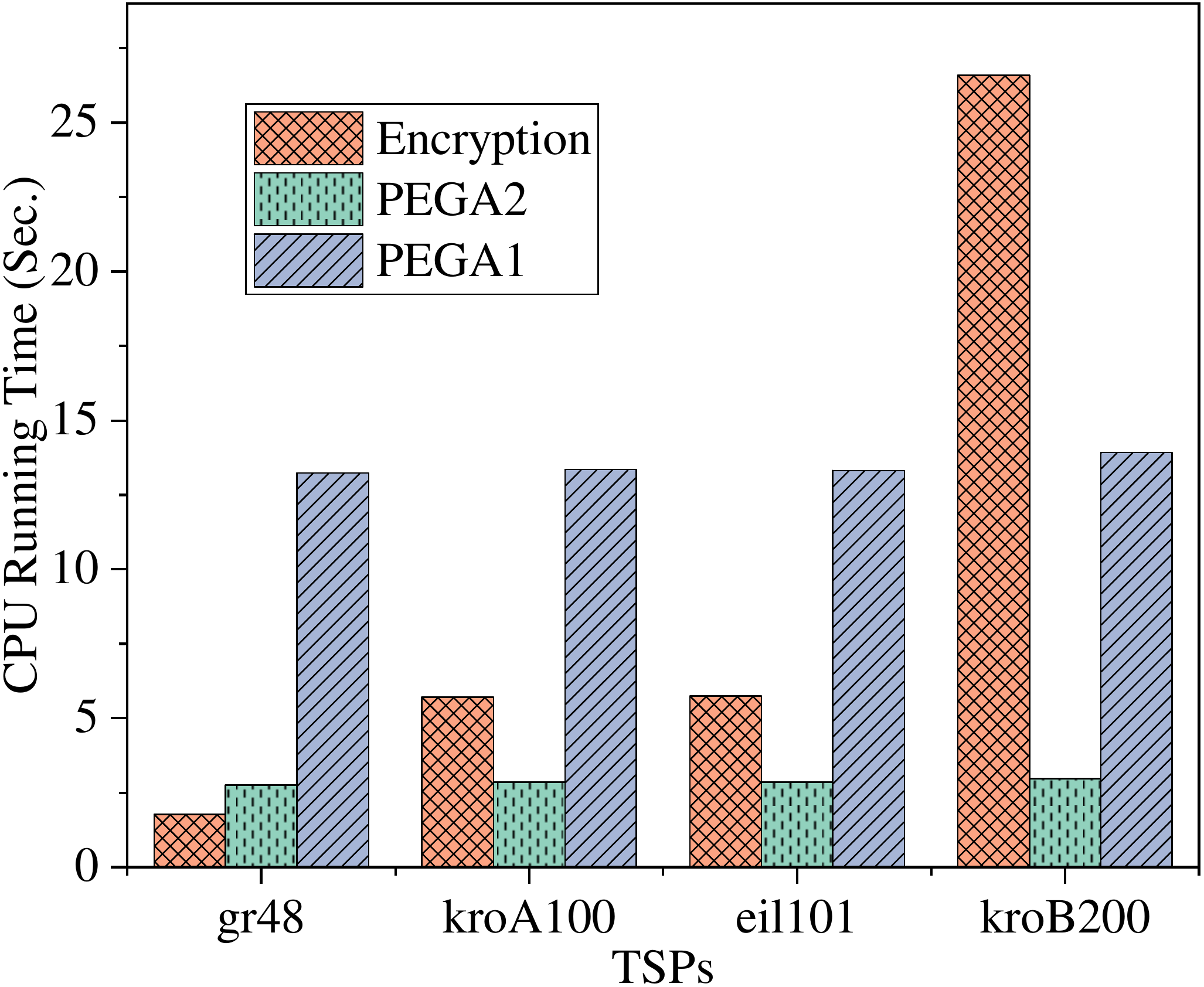}
	\caption{Runtime of PEGA (The average result of 30 independent runs).}
	\label{fig:time}
\end{figure}

\section{Conclusion}
In this paper, we proposed the computing paradigm of evolution as a service (EaaS) and designed a privacy-preserving genetic algorithm for COPs based on EaaS, called PEGA. To show the effectiveness and efficiency of PEGA, we use the widely known TSP to evaluate PEGA. In PEGA, a user encrypts her TSP matrix to protect the privacy and outsources the evolutionary computations to cloud servers. The cloud server performs evolutionary computations over encrypted data and produces an effective solution as conventional GA. To support operations on encrypted TSPs, this paper presented a secure division protocol (\texttt{SecDiv}) and a secure comparison protocol (\texttt{SecCmp}) falling in the twin-server architecture. Experimental evaluations on four TSPs (i.e., gr48, KroA100, eil101, and KroB200) show that there is no significant difference between PEGA and conventional GA. Also, given encrypted TSPs, PEGA with $k$-tournament selection operator can produce one potential solution around 3 s. For future work, we will extend the idea of EaaS to other algorithms, such as particle swarm optimization (PSO), ant colony optimization (ACO).

\ifCLASSOPTIONcaptionsoff
\newpage
\fi

\bibliographystyle{IEEEtran}
\bibliography{IEEEabrv,ref}
	
\begin{IEEEbiography}[{\includegraphics[width=1in,height=1.25in,clip,keepaspectratio]{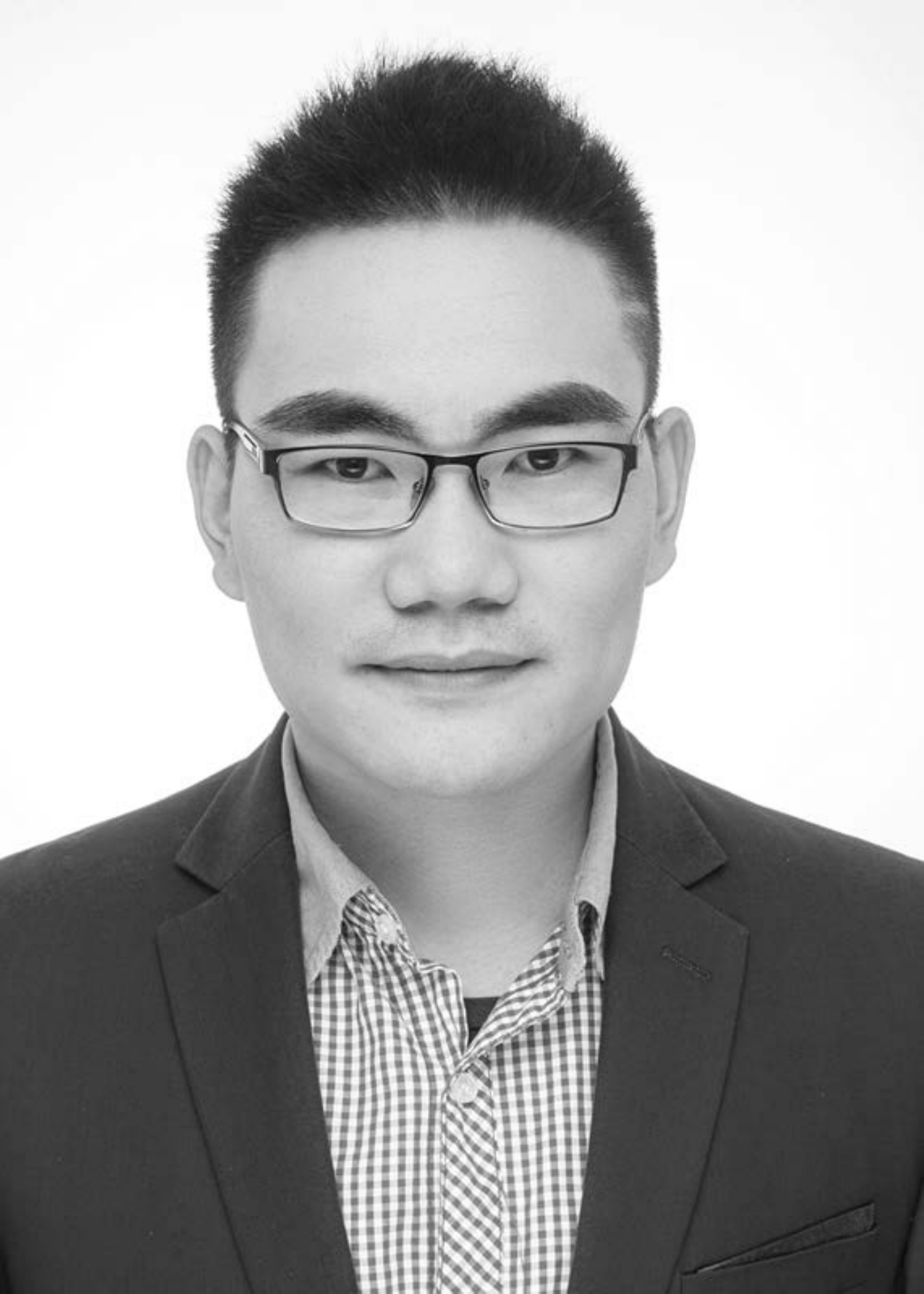}}]{Bowen Zhao} (M'22) received his Ph.D. degree in cyberspace security from South China University of Technology, China, in 2020. He was a research scientist in the school of computing and information systems, Singapore Management University, from 2020 to 2021. Now, he is an associate professor at Guangzhou Institute of Technology, Xidian University, Guangzhou, China. His current research interests include privacy-preserving computation and learning and privacy-preserving crowdsensing.
\end{IEEEbiography}
	
\begin{IEEEbiography}[{\includegraphics[width=1in,height=1.25in,clip,keepaspectratio]{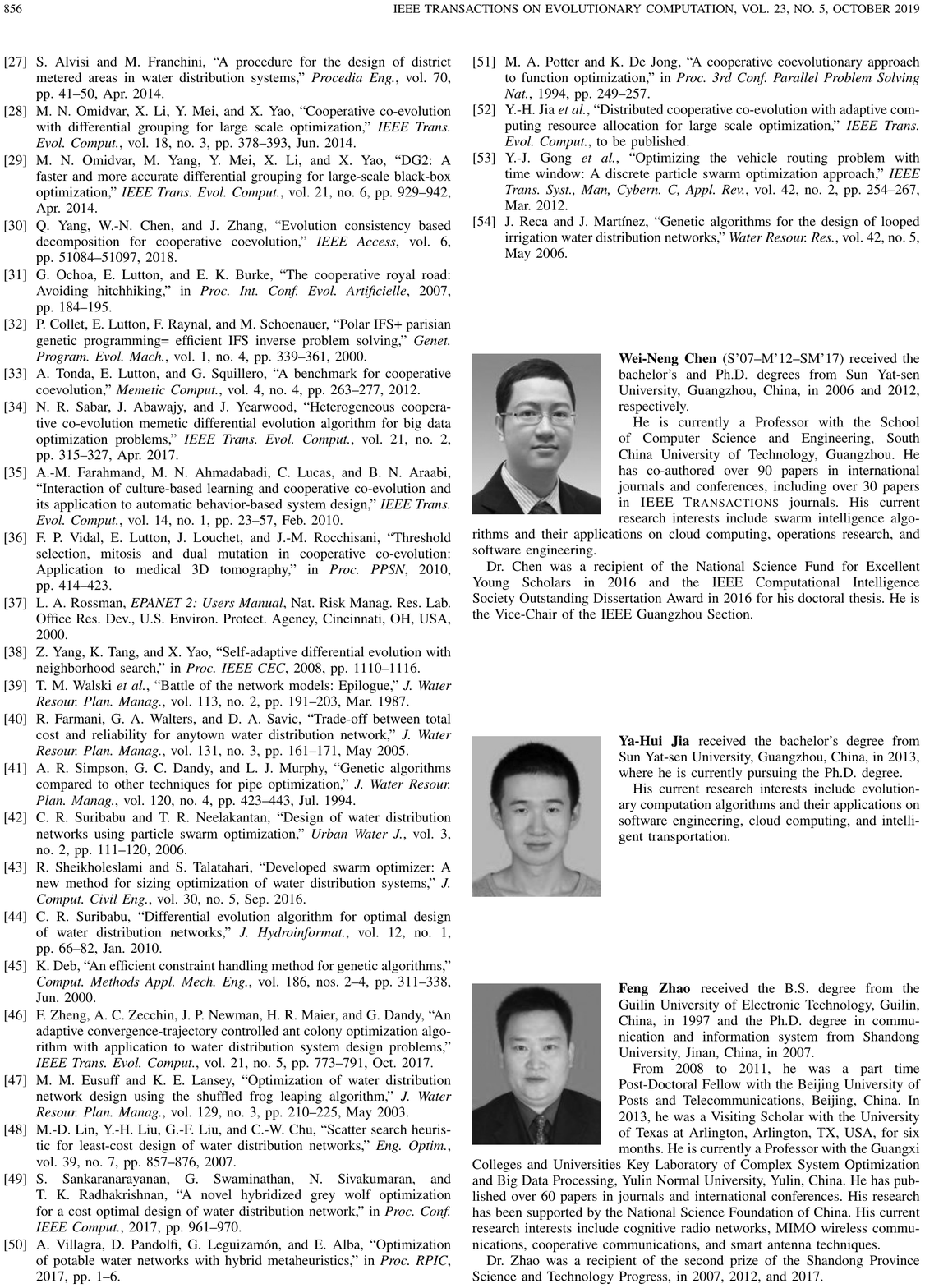}}]{Wei-Neng Chen} (S'07–M'12–SM'17) received thebachelor's and Ph.D. degrees from Sun Yat-sen University, Guangzhou, China, in 2006 and 2012, respectively.
		
He is currently a Professor with the School of Computer Science and Engineering, South China University of Technology, Guangzhou. He has co-authored over 90 papers in international journals and conferences, including over 30 papers in IEEE TRANSACTIONS journals. His current research interests include swarm intelligence algorithms and their applications on cloud computing, operations research, and software engineering.
		
Dr. Chen was a recipient of the National Science Fund for Excellent Young Scholars in 2016 and the IEEE Computational Intelligence Society Outstanding Dissertation Award in 2016 for his doctoral thesis. He is 	the Vice-Chair of the IEEE Guangzhou Section.
\end{IEEEbiography}

\begin{IEEEbiography}[{\includegraphics[width=1in,height=1.25in,clip,keepaspectratio]{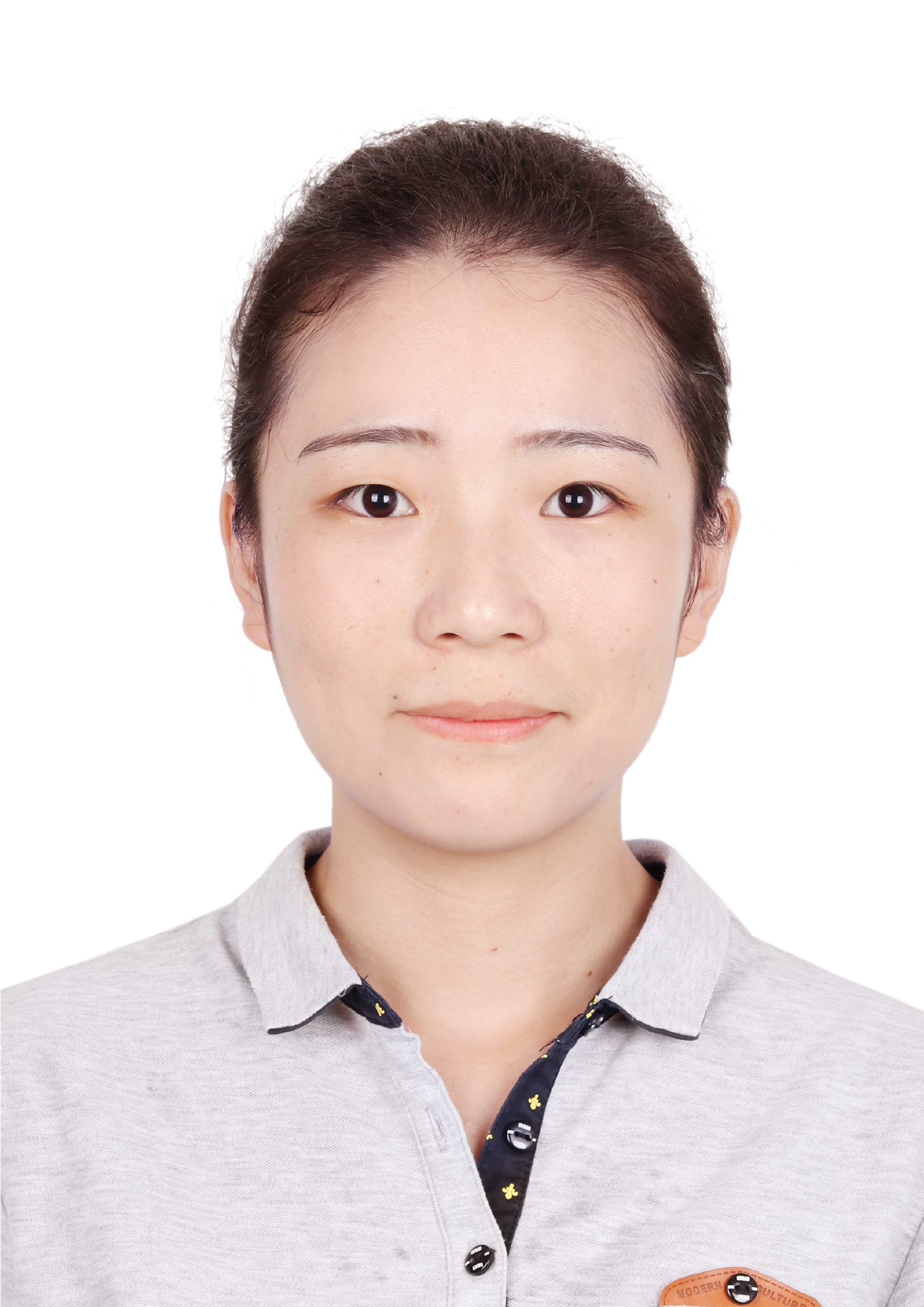}}]{Feng-Feng Wei} received her Bachelor’s degree in computer science from South China University of Technology, Guangzhou, China, in 2019, where she is currently pursuing the Ph.D degree. Her current research interests include evolutionary computation algorithms and their applications on expensive and distributed optimization in real-world problems.
\end{IEEEbiography}

\begin{IEEEbiography}[{\includegraphics[width=1in,height=1.25in,clip,keepaspectratio]{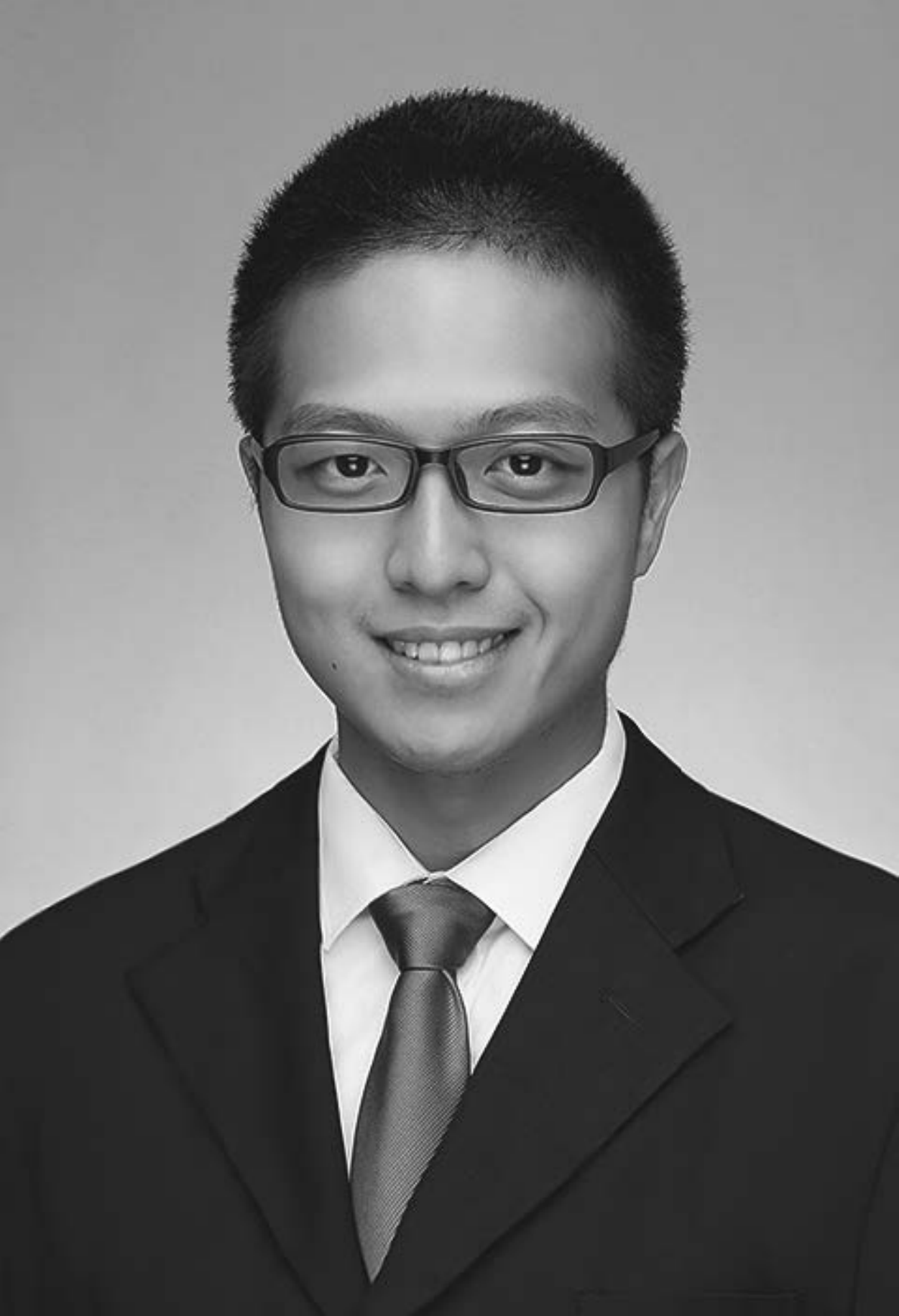}}]{Ximeng Liu} (S'13-M'16-SM'21) received the B.Sc. degree in electronic engineering from Xidian University, Xi'an, China, in 2010 and the Ph.D. degree in Cryptography from Xidian University, China, in 2015. Now he is the full professor in the College of Computer Science and Data Science, Fuzhou University. Also, he was a research fellow at Peng Cheng Laboratory, Shenzhen, China. He has published more than 200 papers on the topics of cloud security and big data security including papers in IEEE TOC, IEEE TII, IEEE TDSC, IEEE TSC, IEEE IoT Journal, and so on. He awards “Minjiang Scholars” Distinguished Professor, “Qishan Scholars” in Fuzhou University, and ACM SIGSAC China Rising Star Award (2018). His research interests include cloud security, applied cryptography and big data security.
\end{IEEEbiography}

\begin{IEEEbiography}[{\includegraphics[width=1in,height=1.25in,clip,keepaspectratio]{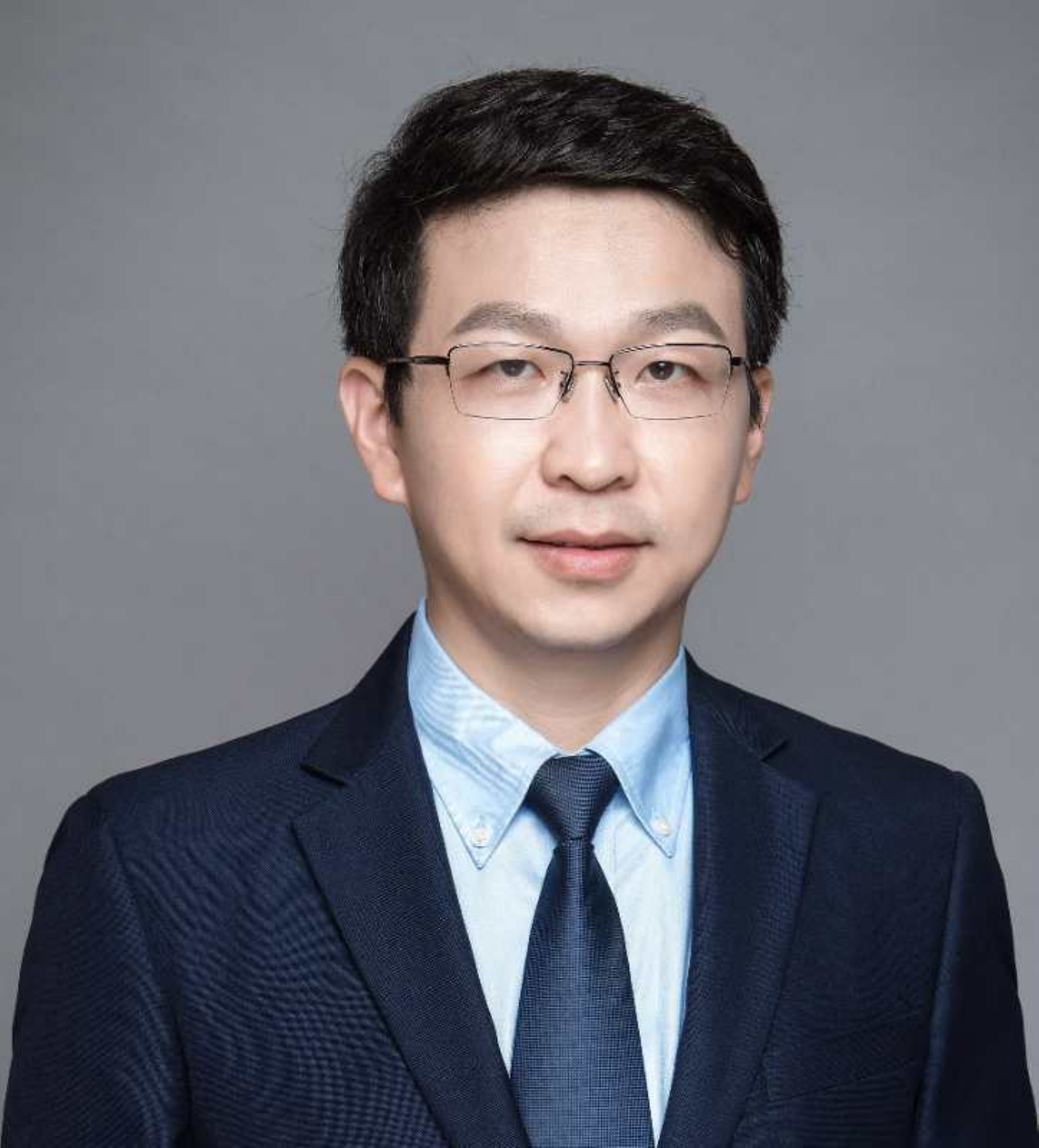}}]{Qingqi Pei} (SM'22) received his B.S., M.S. and Ph.D. degrees in Computer Science and Cryptography from Xidian University, in 1998, 2005 and 2008, respectively. He is now a Professor and member of the State Key Laboratory of Integrated Services Networks, also a Professional Member of ACM and Senior Member of IEEE, Senior Member of Chinese Institute of Electronics and China Computer Federation. His research interests focus on digital contents protection and wireless networks and security. 
\end{IEEEbiography}
	
\begin{IEEEbiography}[{\includegraphics[width=1in,height=1.25in,clip,keepaspectratio]{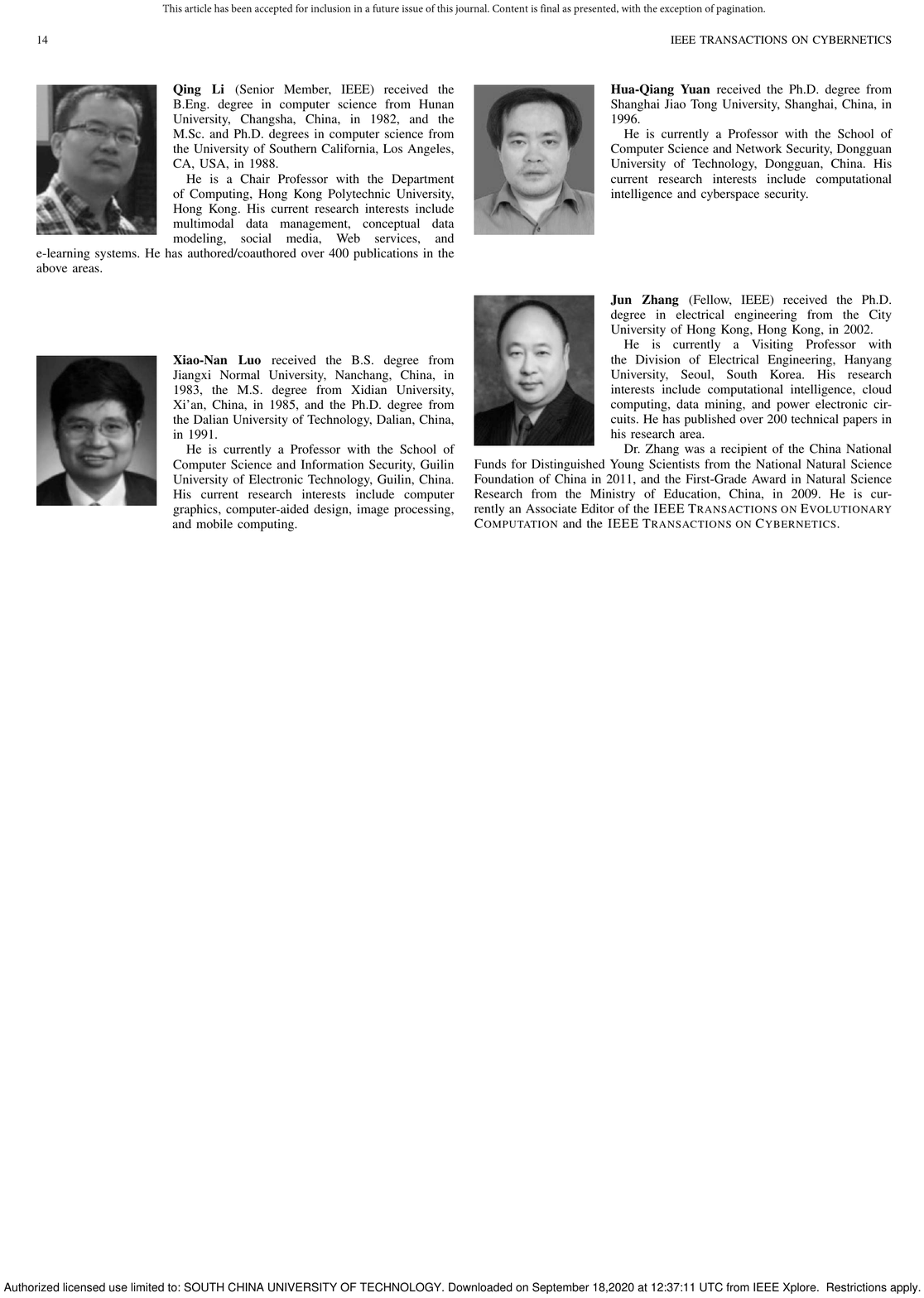}}]{Jun Zhang} (M'02–SM'08–F'17) received the Ph.D. degree from the City University of Hong Kong, Hong Kong, in 2002. He is currently a Professor, Hanyang University, South Korea. His current research interests include computational intelligence, cloud computing, high-performance computing, operations research, and power electronic circuits. Dr. Zhang was a recipient of the Changjiang Chair Professor from the Ministry of Education, China, in 2013, the China National Funds for Distinguished Young Scientists from the National Natural Science Foundation of China in 2011, and the First-Grade Award in Natural Science Research from the Ministry of Education, China, in 2009. He is currently an Associate Editor of the IEEE TRANSACTIONS ON EVOLUTIONARY COMPUTATION, the IEEE TRANSACTIONS ON CYBERNETICS, and the IEEE TRANSACTIONS ON INDUSTRIAL ELECTRONICS. 
\end{IEEEbiography}
	
\end{document}